\ifdefined\pdftexversion \pdfoutput=1 \fi
\documentclass[letterpaper,11pt]{article}

\usepackage[margin=1.25in]{geometry}
\usepackage{amsmath,amssymb,amsthm}
\usepackage{url}
\usepackage{tikz}
\usetikzlibrary{arrows.meta,positioning,shapes.geometric,fit,backgrounds,calc}
\usepackage{booktabs}
\usepackage{enumitem}
\usepackage{microtype}
\usepackage{hyperref}
\usepackage{setspace}
\usepackage{titlesec}
\usepackage{parskip}
\usepackage{xcolor}
\usepackage{caption}
\usepackage{float}

\hypersetup{colorlinks=true,linkcolor=black,citecolor=black,urlcolor=black}

\theoremstyle{definition}
\newtheorem{definition}{Definition}[section]
\newtheorem{proposition}{Proposition}[section]
\newtheorem{corollary}{Corollary}[section]

\titleformat{\section}{\large\bfseries}{\thesection.}{0.5em}{}
\titleformat{\subsection}{\normalsize\bfseries}{\thesubsection}{0.5em}{}
\titleformat{\subsubsection}{\normalsize\itshape}{\thesubsubsection}{0.5em}{}

\newcommand{\grade}[1]{\delta\!\left(#1\right)}

\tikzset{
  box/.style={rectangle, rounded corners=3pt, draw=black!70, fill=gray!8,
              text width=3.0cm, align=center, minimum height=0.9cm, font=\small},
  wbox/.style={box, text width=3.6cm},
  narr/.style={-{Stealth[length=5pt]}, thick},
  darr/.style={-{Stealth[length=5pt]}, dashed, gray},
  cert/.style={rectangle, rounded corners=3pt, draw=black!50, fill=gray!4,
               text width=2.6cm, align=center, minimum height=0.8cm,
               font=\small\itshape},
}

\begin{document}

\begin{center}
{\Large\bfseries Adaptive Domain Models: Bayesian Evolution,\\
Warm Rotation, and Principled Training for\\
Geometric and Neuromorphic AI}

\vspace{0.8em}
Houston Haynes\\
SpeakEZ Technologies, Asheville, NC\\
\texttt{hhaynes2@alumni.unca.edu}\\
\vspace{0.3em}
March 2026
\end{center}

\begin{abstract}
Prevailing AI training assumes reverse-mode automatic
differentiation over IEEE-754 arithmetic. The memory overhead of training
relative to inference, optimizer complexity, and structural degradation of
geometric properties through training are consequences of this arithmetic
substrate. This paper develops an alternative training architecture grounded
in three prior results: the Dimensional Type System and Deterministic Memory
Management framework~\cite{dts-dmm}, which establishes stack-eligible gradient
allocation and exact quire accumulation as design-time verifiable properties;
the Program Hypergraph~\cite{phg}, which establishes grade preservation through
geometric algebra computations as a type-level invariant; and the b-posit
bounded-regime design~\cite{jonnalagadda2025}, which makes posit arithmetic tractable across
hardware targets conventionally considered inference-only. Their composition
enables depth-independent training memory bounded to approximately twice the
inference footprint, grade-preserving weight updates, and exact gradient
accumulation, applicable uniformly to loss-function-optimized and
spike-timing-dependent neuromorphic models. We introduce \emph{Bayesian
distillation}, a mechanism by which the latent prior structure of a
general-purpose model is extracted through the ADM training regime, resolving
the data-scarcity bootstrapping problem for domain-specific training. For
deployment, we introduce \emph{warm rotation}, an operational pattern in which
an updated model transitions into an active inference pathway without service
interruption, with correctness formalized through PHG certificates
and signed version records. The result is a class of
domain-specific AI systems that are smaller
and more precise than general-purpose models, continuously adaptive, verifiably
correct with respect to the physical structure of their domains, and
initializable from existing models.
\end{abstract}

\hrule
\vspace{1em}

\section{Introduction}

\subsection{The Substrate Assumption}

IEEE-754 floating-point arithmetic was standardized in 1985. The practice of
training deep neural networks at scale emerged roughly in 2012. In the intervening
27 years, floating-point hardware became extraordinarily fast, and practitioners
of the emerging deep learning discipline inherited the arithmetic as a fixed
property of the landscape. Nobody chose IEEE-754 for neural network training
on its merits for that application. It was what computers did with real-valued
computation, and the entire ecosystem of training algorithms, optimizers,
regularization techniques, and architectural conventions accumulated on top of it.

The result is a field that is \emph{inured} to the error properties of its
arithmetic substrate in a specific and measurable sense. The Adam
optimizer~\cite{kingma2014} smooths gradient noise through exponential moving
averages. Gradient clipping prevents divergence driven by accumulated rounding
error in deep networks. Learning rate warmup stabilizes early training against
gradient variance that has a precision component. Batch normalization and layer
normalization compensate partly for the accumulation of rounding error across
layers. Mixed-precision training (bfloat16 forward, float32 accumulation) is
an engineering accommodation to the fact that float16 accumulation produces
unacceptable gradient noise while float16 computation is faster. Each of these
techniques is genuinely effective and theoretically motivated. Each also partially
functions as precision compensation for arithmetic whose error properties were
never designed with gradient-based learning in mind.

The consequence that is directly relevant to this paper concerns geometric
structure. IEEE-754 arithmetic makes the structural zeros of a Clifford algebra
Cayley table numerically non-zero through accumulated rounding. A bivector
weight initialized at grade~2 accumulates small grade-0, grade-1, grade-3, and
grade-4 components through gradient updates. The self-reinforcing consequence
is that the network learns to compensate for its own grade corruption, occupying
a training basin that incorporates the arithmetic's error profile as a structural
feature rather than exploiting the algebraic properties the architecture was
designed to provide. This is why Clifford algebra neural networks, despite their
demonstrated theoretical advantages~\cite{ruhe2023,zhdanov2024}, have not
achieved widespread adoption: the training process destroys the properties that
make them useful, and no existing framework provides a substrate that preserves
those properties through training.

This paper develops that substrate.

\subsection{The Three Enabling Results}

Three prior results compose into the training architecture this paper describes.

The first is the Dimensional Type System and Deterministic Memory Management
framework established in~\cite{dts-dmm}, whose dimensional types build on Kennedy's units of measure~\cite{kennedy2009}. That work established forward-mode
automatic differentiation~\cite{baydin2022} as a training modality with a
specific and verifiable coeffect signature~\cite{petricek2014}: no activation tape, $O(1)$ auxiliary
memory per layer, and the inner product in the directional derivative computable
exactly via the quire accumulator. The quire provides exact dot product
accumulation, rounding once after full accumulation rather than once per
multiply-add operation. The combination eliminates two distinct sources of
gradient error: the heap allocation and tape management overhead of reverse-mode,
and the accumulated rounding in gradient inner products. Both are consequences of
IEEE-754 arithmetic, not of learning.

The second is the Program Hypergraph established in~\cite{phg}. That work
demonstrated that grade in Clifford algebra is a type-level invariant under the
DTS framework's dimensional group structure, that grade inference determines the
non-zero entries of the geometric product Cayley table at design time, before
any arithmetic occurs, and that PHG saturation semantics correctly formalize
multi-way geometric constraints that binary edge graphs cannot represent without
information loss. The critical extension for training: the dual-number
representation used in forward-mode autodiff augments each primal multivector
with a tangent component of the same grade. PHG grade inference applies
identically to tangent computations. A grade-2 bivector weight has a grade-2
gradient by the chain rule closure of grade-preserving operations, a
design-time invariant discharged by the type system. A candidate update that
would introduce a component at grade $j \neq k$ does not compile; the
corresponding weight trajectory is structurally unreachable, and the
violation is prevented at elaboration, not surfaced after the fact. In IEEE-754 the same corruption arises as silent numerical drift
during accumulation and must be inferred from downstream behavior; the type
system forecloses the trajectory at elaboration time and surfaces the
violation in the Lattice language server as the code is written.

The third is the b-posit bounded-regime design~\cite{jonnalagadda2025}. The bounded
regime field (limited to $\leq 6$ bits) reduces posit decoder hardware cost to
79\% less power, 71\% less area, and 60\% less latency relative to standard
posit decoders, achieving cost competitive with IEEE-754 float32 decoders on
NPU-class hardware. This is the threshold at which posit arithmetic becomes
tractable on edge and embedded hardware targets. Below this threshold,
posit arithmetic required data center hardware to justify its precision
advantages. At this threshold, posit arithmetic with quire accumulation is a
realistic substrate for AI training on devices that also run inference.

A consequence of the bounded regime design that receives less attention than
the precision profile is the hardware multiplexer unification it enables.
With $r_S = 6$, the regime field is always between 2 and 6 bits in length,
five possibilities selectable by a single MUX. Combined with the sign bit,
the non-significand portion of a posit word has a maximum width of
$1 + r_S + e_S$ bits. As Gustafson establishes~\cite{gustafson2024}, this
structure allows a single hardware implementation to serve 16-bit, 32-bit,
and 64-bit posit operations, since the regime and exponent fields scale
uniformly. IEEE-754 formats cannot share hardware across precisions because
the exponent field width and bias differ between float16, float32, and float64
in ways that require separate decode logic. The b-posit bounded regime
eliminates this cost: one decoder, one MUX, three precisions.

\subsection{Contributions}

This paper makes five claims.

\begin{enumerate}[leftmargin=1.5em]
\item \textbf{The Program Hypergraph (PHG) constitutes a natural substrate for
  Bayesian posterior inference over geometric model parameters.} The Dimensional
  Type System (DTS) dimensional annotations and PHG grade constraints express a
  structural prior over model parameters before any training data is observed:
  the prior is the type system, encoding domain knowledge as type-level
  constraints rather than as regularization penalties. Forward-mode autodiff
  with quire accumulation provides the posterior update mechanism. Distribution
  shift, measured as KL divergence between the model's current predictive
  distribution and the empirical distribution of recent operational observations,
  provides a principled, domain-calibrated trigger criterion for incorporating
  new evidence into an updated model. The correctness conditions for this
  update, and the infrastructure for deploying it without interrupting active
  inference, are developed in Section~4.

\item \textbf{Through this mechanism, Clifford algebra neural networks achieve
  grade preservation through training as a theorem, with exact equivariance
  and stable sparsity as direct corollaries.} Grade preservation follows from
  PHG grade inference and the dual-number coeffect signature of forward-mode
  autodiff: a grade-$k$ weight has a grade-$k$ gradient by the chain rule
  closure of grade-preserving operations, enforced as a design-time constraint.
  Exact rotor equivariance follows because rotor normalization is verified at
  design time through the SMT-LIB2 proof infrastructure. Cayley table sparsity
  is stable across the full training history as a corollary of grade invariance.
  Together these properties make Clifford algebra networks, with their
  demonstrated equivariance advantages for physical simulation and geometric
  learning, a structurally superior substrate for domain-specific AI on the
  b-posit arithmetic foundation.

\item \textbf{General-purpose language models carry latent Bayesian prior
  structure that the ADM training regime can extract and formalize, providing
  a tractable bootstrapping path for domain prior initialization.} Large models
  trained on scientific and technical corpora absorb statistical regularities
  about domain relationships, physical constraints, and probabilistic reasoning
  patterns. This latent structure constitutes an unstructured prior: accessible
  but not formally constrained. The ADM training regime acts as a distillation
  mechanism: the DTS dimensional annotations and PHG grade constraints filter the
  extracted prior, retaining what is dimensionally and geometrically coherent
  with the target domain and discarding what is not. The output is a domain
  model whose prior provenance is traceable to the general model but whose
  structural properties are formally certified by the type system. We term this
  \emph{Bayesian distillation}. It resolves the data-scarcity bootstrapping
  problem that domain-specific training otherwise faces, and it positions
  general-purpose models as prior sources within the ADM architecture rather
  than as alternatives to it. Recent empirical work demonstrating that latent
  Bayesian structure is accessible in general LLMs~\cite{vansteenkiste2026}
  establishes the precondition on which this mechanism depends.

\item \textbf{Spike-timing-dependent plasticity and forward-mode autodiff share
  a common local learning signature, and the Fidelity framework's type system
  provides a unified verification and deployment infrastructure for both.}
  The design-time verification, the Deterministic Memory Management (DMM)
  stack-eligible allocation discipline, the quire accumulation semantics, and
  the versioning infrastructure established for gradient-descent models apply
  to STDP-trained spiking networks without modification. A spiking network
  trained via STDP on a neuromorphic processor and a Clifford algebra network
  trained via forward-mode on a spatial dataflow accelerator are instances of
  the same adaptive domain model architecture, sharing the same formal
  correctness properties while differing in hardware target and temporal
  representation.

\item \textbf{The actor model provides a principled compositional architecture
  between the rigid homogeneity of Mixture of Experts and the structural
  informality of current agentic AI frameworks.} The PHG establishes at
  design time the inter-domain constraints and dimensional properties that
  each actor carries into runtime as structural invariants of its compiled
  form. Clef's actor model, in which Prospero supervises domain computation
  units (Olivier actors), composes these verified actors into a heterogeneous
  system where domain boundaries are enforced by the structure of the actors
  themselves. BAREWire carries dimensional annotations across inter-actor
  message boundaries, making violations detectable at the message fabric level.
\end{enumerate}

\subsection{Relation to Prior Work}

This paper is the third in a sequence. The DTS/DMM paper~\cite{dts-dmm}
established the foundational type system, memory management discipline, and
forward-mode autodiff analysis. The PHG paper~\cite{phg} extended this to
multi-way geometric constraints, grade-typed Clifford algebra computation, and
spatial dataflow architectures. This paper takes the PHG's grade preservation
properties and the DTS/DMM's forward-mode coeffect analysis as established
results and develops their implications for a training architecture that
operates continuously, at depth-independent training memory cost, on domain-specific geometric
and neuromorphic models.

Throughout this paper, the term \emph{design time} refers to the period when
an engineer is writing source code, before any build step is invoked and before
any execution occurs. In the Fidelity framework, the Composer compiler runs
continuously as a language server process through Lattice, elaborating the
program's type constraints incrementally as source is edited and surfacing
diagnostics in the development environment as they are discovered. This is
distinct from what practitioners in ML frameworks typically mean by
``compile time,'' which commonly refers either to a discrete build invocation
or to JIT kernel compilation that occurs at execution onset. A grade violation,
a dimensional inconsistency in a loss term, or a boundary mismatch in an
inter-actor message: each of these surfaces as a diagnostic at design time
in the Fidelity framework, before any build is initiated and before any
training run is attempted. The significance for model development is that
the class of errors described in this paper are not discovered by running
experiments; they are excluded by the structure of the program as it is written.

\section{The IEEE-754 Inurement Argument}

\subsection{Precision Loss Overhead as Training Overhead}

The memory overhead of neural network training relative to inference is
commonly quoted as a factor of three to ten, depending on model size, optimizer
choice, and mixed-precision configuration. This figure is calibrated on
reverse-mode automatic differentiation with IEEE-754 arithmetic, and it is
treated in the field as a near-universal constant. It is not. It is a
consequence of specific arithmetic choices, and identifying those choices
precisely is necessary before proposing an alternative.

Reverse-mode autodiff requires storing every intermediate activation from the
forward pass for use in the backward pass. For a network with $L$ layers and
batch size $B$, this is $O(L \cdot B)$ storage that serves no purpose during
inference and is discarded after each training step. The activation tape is
a feature of computing the gradient efficiently via the chain rule over a
stored computation graph, one that forward-mode autodiff with random
projection eliminates: it computes an unbiased gradient estimate in a
single forward pass with $O(1)$ auxiliary memory per layer~\cite{baydin2022}.
The tape cost is an artifact of the reverse-mode algorithm, separable from
the learning objective itself.

The optimizer state cost is a second component. Adam~\cite{kingma2014}
maintains first and second moment estimates of the gradient for each parameter,
doubling the parameter storage. The moment estimates serve two purposes: they
reduce gradient variance (a learning benefit) and they smooth gradient noise
(a precision compensation benefit). With exact quire accumulation, the precision
compensation component of moment estimation is unnecessary. The remaining benefit,
variance reduction, is achievable with simpler online statistics at lower storage
cost.

Gradient clipping is the most direct form of precision compensation: it truncates
gradient magnitudes that have grown large through accumulated rounding, preventing
weight updates from driving parameters into degenerate regions. A gradient that
is large because the loss landscape is steep in a particular direction is
informative and should not be clipped. A gradient that is large because accumulated
rounding errors have compounded across layers is noise. In IEEE-754 these are
indistinguishable without additional diagnostic machinery. With quire exact
accumulation, the gradient magnitude reflects the loss landscape rather than the
arithmetic, and the diagnostic need for clipping as a safety mechanism is
substantially reduced.

\subsection{Grade Corruption as a Training Failure Mode}

The inurement argument has a specific geometric form in the context of Clifford
algebra neural networks. A weight matrix $W$ of declared grade $k$ is updated
by the gradient $\nabla_W \mathcal{L}$. In IEEE-754, the gradient computation
involves accumulation of products along the backward pass, and the accumulated
value has rounding error deposited at grades other than $k$. After $n$ training
steps, the weight $W$ is no longer a pure grade-$k$ element:
\[
W^{(n)} = W^{(0)}_k + \sum_{j \neq k} \epsilon_j^{(n)} e_j
\]
where $W^{(0)}_k$ is the grade-$k$ initialization, $e_j$ are basis elements at
grade $j \neq k$, and $\epsilon_j^{(n)}$ are accumulated rounding contributions
that grow with $n$. The contamination is systematic, not random: the same
arithmetic path produces the same rounding bias on each forward pass, so
$\epsilon_j^{(n)}$ is correlated across steps rather than averaging out.

The forward pass through a contaminated weight produces contaminated activations.
The contaminated activations propagate to the next layer and beyond. The backward
pass through contaminated activations produces contaminated gradients. The
contaminated gradients update the weights in directions that partially compensate
for the activation contamination. The network converges to a basin where
contamination and compensation are in equilibrium, which is a different basin
from the one the clean algebraic structure would have produced.

The practical consequence was identified in Section~3.4 of~\cite{phg}: runtime
sparsity detection cannot distinguish a structurally zero Cayley entry (zero by
the algebra's rules, regardless of inputs) from an entry that is small due to
accumulated rounding. The Flash Clifford implementation~\cite{flash-clifford}
addresses this by manually hardcoding the non-zero entries, but the manually
hardcoded sparsity applies only to the initialized weight structure, not to the
trained weight structure. After training in IEEE-754, the trained weights have
non-zero contributions at entries that were structurally zero at initialization,
and the manual hardcoding is no longer valid. The sparsity advantage, the primary
computational motivation for Clifford algebra networks, cannot survive training
in IEEE-754.

\subsection{What Changes with the Principled Substrate}

With PHG grade inference, forward-mode autodiff, and quire accumulation, the
grade corruption mechanism is eliminated at the source rather than managed after
the fact.

Grade inference at the type level establishes which Cayley entries are
structurally zero before any arithmetic occurs. These entries have no arithmetic
path; they are absent from the compiled computation, not merely small in value.
The quire provides exact accumulation for the arithmetic paths that do exist,
so the non-zero entries are computed without accumulated rounding. The dual-number
extension of the forward pass carries grade-typed tangent components through
the same type-level constraints as the primal: the gradient of a grade-$k$
weight is certified grade-$k$ by the compiler with the same formal guarantee
as any other type annotation in the system.

The training process, under this substrate, is grade-preserving by construction.
The weight $W^{(n)}_k$ after $n$ training steps is certified grade-$k$ for all
$n$; the type system makes any other grade an error at design time, visible
in the Lattice language server as the code is written.
The trained model's sparsity is identical to the initialized model's sparsity.
The computational advantage is stable across the full training history and
through any number of updates under our warm rotation model.

\subsection{Multi-Tangent Forward Gradients and the Gram Matrix}
\label{sec:multi-tangent-fg}

The single-tangent forward gradient~\cite{baydin2022} used in the preceding
subsections provides an unbiased estimate of $\nabla f$ with $O(1)$ auxiliary
memory per layer, at the cost of a variance that increases with the parameter
dimensionality $n$. For a single tangent $v$, the estimator $g_v = (\nabla f
\cdot v)\, v$ lies in $\mathrm{span}(v)$, and its expected cosine similarity
to $\nabla f$ for random tangents decreases as $O(1/n)$. At scale, the
single-tangent estimator becomes a limiting factor on optimization quality,
not on memory.

Fl\"{u}gel et al.~\cite{flugel2026multi} generalize the construction to a
multi-tangent estimator over $k$ linearly independent tangents $V =
\{v_1, \dots, v_k\}$. With $\mathbf{V} = (v_1 | \dots | v_k)$ and $U =
\mathrm{span}(V)$, the orthogonal projection
\[
P_U(\nabla f) = \mathbf{V}\,(\mathbf{V}^\top \mathbf{V})^{-1}\, \mathbf{V}^\top
\nabla f
\]
is the approximation-optimal combination of the directional derivatives along
$V$: it minimizes $\|\nabla f - g\|_2$ over all $g \in U$, reduces to the true
gradient when $\nabla f \in U$, and is exact for $k$ linearly independent
tangents whenever the gradient lies in their span. The estimator subsumes the
sum and mean aggregations of the earlier forward-gradient literature and
provides the approximation-optimal combination under arbitrary linearly
independent tangent sets. The coeffect signature of Section~3.3 and
of~\cite{dts-dmm} extends without change: the $k$ JVP passes are independent
forward passes with no activation tape, the auxiliary memory grows from $O(1)$
to $O(k)$ per layer, and every intermediate is stack-eligible. The
escape analysis that certified the single-tangent forward pass as stack-scoped
certifies the multi-tangent extension by the same argument, applied $k$
times.

The orthogonal projection requires the inverse of the Gram matrix
$G(V) = \mathbf{V}^\top \mathbf{V}$ with entries $G_{ij} = v_i \cdot v_j$.
Each entry is an inner product of tangent vectors; in the activity-perturbation
regime typical of forward-mode training, each tangent is the perturbation
direction for a single layer's output activation, and $n$ denotes the activation
dimension, not the weight count. $G(V)$ is a small-$k$ matrix regardless
of the parameter count, and its entries are the precise accumulation of
products that the quire makes exact. The Gram matrix computation is therefore
a direct application of the quire infrastructure from~\cite{dts-dmm}:
allocate a quire per entry at the start of the accumulation, accumulate
$k(k+1)/2$ inner products without intermediate rounding, convert to posit
representation at the end. The inversion of $G(V)$ is a dense $k \times k$
operation whose cost is $O(k^3)$, dominated for small $k$ by the $O(k
\cdot \mathrm{ops}(f))$ cost of the $k$ forward passes themselves. The
coeffect system's existing quire lifetime analysis applies unchanged: each
Gram entry's quire is allocated at the entry's accumulation loop boundary
and rounded to posit at loop exit.

The numerical stability of the orthogonal projection depends on the
condition number of $G(V)$, which in turn depends on how near-orthogonal
the sampled tangents are. For tangents drawn from $\mathcal{N}(0, I_n)$
with large $n$, rotational symmetry produces tangents that are near-orthogonal
on average, and $G(V)$ is well-conditioned. Near local minima of the loss,
however, the gradient magnitude is small and the single-tangent estimates
$g_{v_i}$ become dominated by IEEE-754 rounding error at the precision
budgets typical of modern training. This is the same failure mode that
motivated the quire accumulator for the forward pass: the accuracy of the
inner product is a function of the arithmetic substrate's precision at the
magnitudes being computed. The b-posit format~\cite{jonnalagadda2025}
compounds this advantage. Its exponent bias relocates the high-precision region to $2^{-2}$ or $2^{-3}$, placing the peak relative accuracy at the small gradient magnitudes the estimator must resolve; the quire's wide-accumulator inner product supplies the rest, holding the Gram matrix entries to a bounded residual instead of accumulating IEEE-754 rounding across the summation. The projection's exactness guarantee in~\cite{flugel2026multi} presupposes arithmetic able to distinguish the gradient's direction from rounding noise; the quire-plus-b-posit substrate keeps that residual small at the magnitudes where it matters most.

The approximation quality of the estimator is a function of the ratio $k/n$,
where $k$ is the number of tangents and $n$ is the effective parameter or
activation dimension. The ratio is computable at design time from the
dimensional annotations on the layer's parameter or activation tensor,
which is a different kind of quantity than the empirical gradient norms and
loss curves that the training literature usually consults when setting
optimizer hyperparameters. The framework presented here carries the
dimensional and grade annotations through compilation; the ratio $k/n$ is
therefore derivable from those annotations during PSG elaboration, and the
Lattice language server can surface the gradient approximation quality of a
chosen $k$ as a design-time property of the training configuration rather
than an empirical observation after the run has begun. This fits the broader
argument of this paper that training configuration should be grounded in the
same formal framework that governs dimensional correctness and memory
placement.

The combination of multi-tangent forward gradients with PHG grade inference
produces a further reduction in the effective $n$, and therefore a further
improvement in the $k/n$ ratio, that neither component provides in isolation.
The grade structure of a Clifford algebra network restricts the non-zero
subspace of the gradient to the entries permitted by the same Cayley table
that governs the forward pass; the effective dimensionality of the gradient
is the dimension of this non-zero subspace, not the full parameter count.
Tangents drawn from the grade-structured subspace yield an estimator whose
approximation quality for fixed $k$ is improved by the sparsity factor of
the grade structure. Section~7.6 of~\cite{phg} develops this direction in
detail; the present paper establishes the multi-tangent framework as a
component of the ADM training substrate, and the PHG paper establishes the
structural mechanism that makes grade-structured tangent selection
tractable. Empirical validation across specific geometric algebra
architectures is an item of active research.

\section{The Bayesian Training Model}

\subsection{Posterior Update as the Principled Alternative}

Gradient descent in the standard formulation is a point estimate method: the
model maintains a single set of weights and moves them in the direction of the
negative gradient. The method works because the loss landscape, empirically,
tends to have many good minima and the optimizer finds one of them. What the
method does not provide is a calibrated measure of uncertainty: the trained
model produces outputs but cannot reliably indicate when those outputs are
in a region of its input space that is poorly covered by training data.

That this is a consequential and empirically confirmed gap is
supported by recent empirical work. Van Steenkiste and Linzen~\cite{vansteenkiste2026}
demonstrate that off-the-shelf large language models fail to exhibit genuine
Bayesian updating behavior: their performance on a sequential recommendation
task, where the optimal strategy requires updating a belief distribution over
user preferences across five rounds, plateaus after a single interaction
regardless of subsequent evidence. The optimal symbolic Bayesian model in
their study achieves 81\% accuracy; unmodified LLMs cluster significantly
below this and show limited improvement over multiple interactions. The authors
address this through supervised fine-tuning on demonstrations from the Bayesian
model, a procedure they term ``Bayesian teaching,'' and find that the resulting
models generalize improved Bayesian behavior to unseen domains. The empirical
conclusion is that Bayesian reasoning structure does not emerge spontaneously
from scale or general pre-training; it must be introduced deliberately. The
question of mechanism, how that structure is introduced and what properties
it carries, is where the present paper diverges from that line of work, as
discussed in Section~\ref{sec:related}.

A principled Bayesian training model maintains a posterior distribution over
the model's parameters rather than a point estimate. Given a prior $p(\theta)$
encoding structural knowledge about the domain (which, in the Fidelity
framework, is expressed in part through the DTS dimensional annotations and
PHG grade constraints) and observed data $\mathcal{D}$, the posterior is:
\[
p(\theta \mid \mathcal{D}) \propto p(\mathcal{D} \mid \theta)\, p(\theta)
\]
The prior $p(\theta)$ is not uniform. The DTS dimensional type annotations
encode the physical structure of the domain: a weight connecting a velocity
field layer to a pressure gradient layer carries a dimensional prior
$\langle \mathrm{m\,s^{-1}} \rangle \to \langle \mathrm{Pa\,m^{-1}} \rangle$
that constrains the posterior regardless of the data. The PHG grade constraints
encode the geometric structure: a grade-2 bivector weight has a prior that
concentrates probability mass at grade 2, enforced as a type constraint rather
than as a regularization penalty.

Forward-mode autodiff with the quire provides a natural mechanism for posterior
approximation. The directional derivative $\langle \nabla_\theta \mathcal{L},
v \rangle$ for a random unit vector $v$ is an unbiased scalar estimate of the
gradient projected onto $v$. Under a Gaussian posterior approximation, repeated
directional derivative estimates with independent random projections accumulate
into a low-rank estimate of the Hessian's eigenvectors, which together with the
gradient magnitude provide the curvature information needed for a natural
gradient update. The quire's exact accumulation means each directional derivative
estimate is computed without the numerical noise that contaminates Hessian
estimates in IEEE-754, making the posterior approximation more accurate for a
given number of observations.

\subsection{Distribution Shift as the Adaptation Trigger}

A model deployed in an operational context observes data drawn from the
operational distribution, which may differ from the training distribution.
Under a Bayesian framing, this is simply new evidence: the posterior over
model parameters should be updated as operational data accumulates. The
question is not whether to update but when the accumulated evidence is
sufficient to justify a verified model update.

The appropriate trigger criterion is the KL divergence between the model's
current predictive distribution and the empirical distribution of recent
operational observations:
\[
D_{\mathrm{KL}}\bigl(p_{\mathrm{empirical}} \;\|\; p_{\mathrm{model}}\bigr) > \epsilon_{\mathrm{domain}}
\]
where $\epsilon_{\mathrm{domain}}$ is a domain-calibrated threshold. For a
structural health monitoring model, the threshold is calibrated in terms of
sensor confidence intervals and the dimensional ranges established in the DTS
annotations. For a fluid dynamics model, it is calibrated in terms of regime
transition indicators. The dimensional type annotations make the threshold
semantically grounded in the domain rather than architecturally generic.

This trigger criterion is distinct from a degradation signal in both direction
and interpretation. A degradation signal is backward-looking: the model
performed worse on recent data, indicating a problem to be corrected. A
distribution shift trigger is forward-looking: new information is available
that would make the model more accurate, and the warm rotation mechanism
defined in Section~4 provides the operational path to incorporate it. The model's state is defined
by its most recent posterior update: a weight configuration current as of
time $t_0$, scheduled for revision at time $t_1$ when sufficient new
evidence has accumulated.

Figure~\ref{fig:bayesian-cycle} shows the full adaptive cycle from prior
specification through operational evidence accumulation to verified model
update and re-entry into inference.

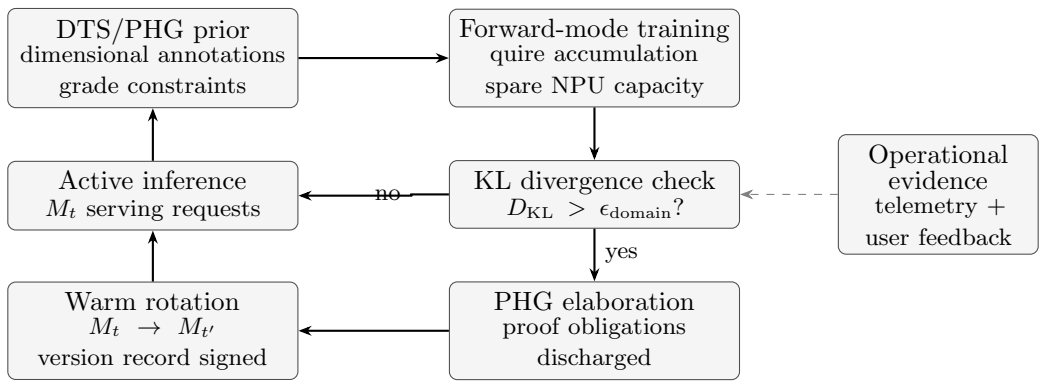
\begin{figure}[h]
\centering
\begin{tikzpicture}[node distance=0.7cm and 1.5cm]
  \node[wbox] (prior)  {DTS/PHG prior\\[-2pt]{\footnotesize dimensional annotations\\grade constraints}};
  \node[wbox, right=2.0cm of prior] (train)  {Forward-mode training\\[-2pt]{\footnotesize quire accumulation\\spare NPU capacity}};
  \node[wbox, below=of train] (kl)    {KL divergence check\\[-2pt]{\footnotesize $D_{\mathrm{KL}} > \epsilon_{\mathrm{domain}}$?}};
  \node[wbox, below=of prior] (inf)   {Active inference\\[-2pt]{\footnotesize $M_t$ serving requests}};
  \node[wbox, below=of kl]    (elab)  {PHG elaboration\\[-2pt]{\footnotesize proof obligations\\discharged}};
  \node[wbox, left=2.0cm of elab]  (rot)   {Warm rotation\\[-2pt]{\footnotesize $M_t \to M_{t'}$\\version record signed}};

  \node[box, right=1.3cm of kl, text width=2.4cm] (obs) {Operational\\[-2pt]evidence\\[-2pt]{\footnotesize telemetry +\\user feedback}};

  \draw[narr] (prior) -- (train);
  \draw[narr] (train) -- (kl);
  \draw[narr] (kl)    -- node[right, font=\footnotesize] {yes} (elab);
  \draw[narr] (elab)  -- (rot);
  \draw[narr] (rot)   -- (inf);
  \draw[narr] (inf)   -- (prior);

  \draw[narr] (kl.west) -- +(-0.5,0) |- node[left, font=\footnotesize, pos=0.25] {no} (inf.east);

  \draw[darr] (obs) -- (kl);
\end{tikzpicture}
\caption{The adaptive domain model cycle. The DTS and PHG annotations constitute
the structural prior. Forward-mode training in spare compute capacity produces
a candidate posterior. Operational evidence (telemetry and user feedback)
drives the KL divergence check. When the threshold is crossed, PHG elaboration
discharges proof obligations before warm rotation. Below-threshold observations
accumulate without triggering a rotation.}
\label{fig:bayesian-cycle}
\end{figure}

\subsection{Consumer-Supplied Evidence and Operational Provenance}

The model's operational context provides two classes of evidence that are
unavailable at initial training time: sensor measurements, telemetry, and
derived observations from the deployment environment (which we term
\emph{operational evidence}), and user-supplied preference judgments on
model outputs (which constitute a specific form of likelihood signal in the
Bayesian model).

User preference judgments, when expressed in the context of a specific model
output over a specific input, are observations about the model's predictive
accuracy in a particular region of the input space. They update the posterior
over the model's parameters in that region. This is not RLHF in the conventional
sense, which constructs a separate reward model from preference data and uses
it to fine-tune via a policy gradient. It is a direct Bayesian likelihood
update: the user's judgment that output $y$ was incorrect for input $x$ is an
observation that $p(y \mid x, \theta)$ should be low, which constrains the
posterior over $\theta$. The update is local to the relevant region of parameter
space, dimensional type-compatible with the domain, and incorporated into the
same warm rotation pipeline as operational sensor evidence.

The versioning infrastructure records the provenance of all evidence contributing
to a model update. A version record carries: the KL divergence value at trigger
time, the sources of the operational evidence (sensor types, time windows),
any user preference observations with their input contexts, the PHG structural
certificate of the updated weight configuration, and the post-quantum signature
anchoring the record to the keys of the operating organization. A consumer of
the model system can verify not only what structural properties the current
model has but why each version transition occurred and what evidence drove it.

\subsection{Bayesian Distillation from General-Purpose Models}

The Bayesian training model described in this section treats the structural
prior as something to be constructed: dimensional type annotations and PHG
grade constraints encode domain knowledge before any training data is observed.
In practice, constructing a well-founded prior for a specialized physical domain
from first principles requires either sufficient domain-specific training data
or a hand-crafted symbolic model of the domain's structure. Both prerequisites
may be scarce in the early stages of deploying an adaptive domain model.

A tractable alternative is available. Large general-purpose language models,
trained on scientific literature, engineering documentation, and technical
corpora at scale, absorb statistical regularities about physical relationships,
dimensional constraints, causal dependencies, and probabilistic reasoning
patterns. This accumulated structure constitutes a latent prior: not formally
specified, not dimensionally annotated, and not geometrically coherent in the
sense the DTS and PHG require, but present and accessible.

Recent empirical work~\cite{vansteenkiste2026} establishes the key precondition:
latent Bayesian structure in general-purpose LLMs is accessible through targeted
fine-tuning and generalizes across domains. This result, developed independently
of the ADM framework, confirms that the prior knowledge absorbed during
large-scale pretraining is not merely a surface statistical regularity but a
structural property that can be elicited and directed. The van Steenkiste and
Linzen study evaluates belief updating on a sequential recommendation task
with a five-round horizon and discrete preference structure, and their
generalization claim covers transfer across semantically related task
families of comparable structure. Physical domains of the kind targeted by
ADM, with continuous dimensional parameters, geometric grade structure, and
strong multi-physics coupling, lie outside the evaluated task family. The
result therefore establishes that elicitation is empirically feasible in at
least one non-trivial regime; whether the same elicitation procedure
transfers cleanly to dimensionally annotated physical domains is an open
question, and the filtering stage described below is designed in part to
absorb the transfer gap.

We propose \emph{Bayesian distillation} as the mechanism that makes this
accessible structure useful within the ADM architecture. The process operates
in three stages. In the extraction stage, a general-purpose model is queried
over a target domain's problem space: its outputs, uncertainty characterizations,
and internal activation patterns are collected as an unstructured empirical
prior distribution over domain parameters. In the formalization stage, this
distribution is passed through the ADM training regime: the DTS dimensional
annotations act as a filter, retaining probability mass consistent with the
domain's dimensional constraints and attenuating mass that violates them; the
PHG grade constraints perform the same function for geometric structure. In the
certification stage, the resulting weight configuration is elaborated by
Composer, PHG proof obligations are discharged, and the distilled prior is
registered as a version record with provenance traceable to the source model.

\paragraph{From type constraints to weight-space priors.} The filtering stage
is given a concrete construction through the mapping between type-level
constraints on a layer's interface and the support of the prior on that
layer's weight matrix. Consider a linear layer $f(x) = W x$ with input
dimensional type $T_{\mathrm{in}}$ (say, velocity, $\mathrm{m\,s^{-1}}$) and
output dimensional type $T_{\mathrm{out}}$ (say, pressure gradient,
$\mathrm{Pa\,m^{-1}}$). DTS inference~\cite{dts-dmm} assigns $W$ the
dimensional signature $T_{\mathrm{out}} / T_{\mathrm{in}} =
\mathrm{Pa\,s\,m^{-2}}$. Any weight configuration with entries that do not
carry this dimension is outside the admissible subspace
$\mathcal{W}_{\mathrm{dim}} \subset \mathbb{R}^{d_\mathrm{out} d_\mathrm{in}}$
(the latter denoting the $d_\mathrm{out} \times d_\mathrm{in}$ real matrix
space). For a Clifford
layer with input grade $k$ and output grade $k'$, PHG inference~\cite{phg}
restricts $W$ to the subspace $\mathcal{W}_{\mathrm{grade}}$ of linear maps
between the grade-$k$ and grade-$k'$ subspaces of the ambient multivector
space; entries at other grade pairs are type violations and carry zero
probability. The admissible weight support is the intersection
$\mathcal{W}_{\mathrm{adm}} = \mathcal{W}_{\mathrm{dim}} \cap
\mathcal{W}_{\mathrm{grade}}$.

The type system therefore induces a prior of the form
\[
p(W) \propto \mathbf{1}[W \in \mathcal{W}_{\mathrm{adm}}] \cdot \tilde{p}(W),
\]
where $\tilde{p}(W)$ is a density over $\mathcal{W}_{\mathrm{adm}}$ and the
indicator is exact, not a penalty. Writing $\Pi_{\mathrm{adm}}$ for the
orthogonal projector onto $\mathcal{W}_{\mathrm{adm}}$, the filtering stage
maps a distillation candidate $W_{\mathrm{cand}}$ drawn from the LLM's
extracted distribution to $\Pi_{\mathrm{adm}} W_{\mathrm{cand}}$, and
reweights the density of the surviving mass to match $\tilde{p}$. The
density $\tilde{p}$ is constructed from the distilled distribution by
conditioning on $\mathcal{W}_{\mathrm{adm}}$: the relative probabilities the
LLM assigned to compatible weight configurations are retained, while mass on
incompatible configurations is removed rather than penalized. This is the
operational sense in which the type constraints act as a filter rather than
a regularizer: the regularizer formulation permits incompatible
configurations at high energy cost, while the filter formulation assigns
them zero probability and removes them from the support of the prior.

Two consequences follow. First, the bridge is mechanical, not heuristic: the
admissible support is determined at elaboration time by type inference, and
the projector $\Pi_{\mathrm{adm}}$ is a deterministic function of the layer's
type signature. Second, the distilled prior is a proper probability
distribution on $\mathcal{W}_{\mathrm{adm}}$ whose posterior under domain
data is well-defined and computable by the forward-mode procedure of
Section~3.1. The distillation regime inherits the calibration properties of
the LLM where those properties are consistent with the type system, and
discards them where they are not.

The output is a domain model whose prior was seeded from the breadth of a
general model's training but whose structural properties are formally
certified by the type system. The distillation regime does not copy the
general model's outputs; it extracts the probabilistic structure implicit in
those outputs and imposes the domain's type-level constraints as a formalizing
filter. What survives is a prior that is both informed by large-scale
pretraining and verifiably consistent with the domain's physical structure.

This mechanism resolves the data-scarcity bootstrapping problem in a
principled way and repositions general-purpose models within the ADM
architecture: not as alternatives to domain-specific models but as prior
sources that the distillation regime formalizes. The general model contributes
breadth; the type system contributes precision; the combination produces a
domain model that is both informed and certifiable.

\section{The Warm Rotation Architecture}

The term \emph{warm rotation} is introduced in this paper as a novel
operational pattern. ML practitioners will recognize the cold-start penalty:
a model that has been unloaded from memory requires full reinitialization
before it can serve inference requests, a process that incurs latency and
compute cost. Warm rotation describes a distinct and more demanding property:
the managed exchange of active model weights while inference continues,
such that no request observes a service interruption and the incoming
configuration has been structurally verified before it enters the pathway.
The analogy to a cold start is deliberate; what we are specifying here is
the architectural conditions under which the transition can be made without
one. Those conditions derive from properties of the training algorithm and
the type system, not from any specific silicon architecture. The warm rotation
mechanism is hardware-agnostic in the same sense that forward-mode autodiff
is hardware-agnostic: the relevant properties hold wherever inference is
feasible, across CPU, GPU, NPU, CGRA, and neuromorphic targets alike.

The architecture described in this section represents active design work.
The formal structure is established here as a specification: the correctness
conditions, the memory feasibility argument, and the relationship to the
PHG elaboration and versioning infrastructure are developed with sufficient
precision to guide implementation. The implementation itself, across the
full range of hardware targets and deployment topologies described, is in
progress. Where the text uses present-tense language to describe system
behavior, it should be read as describing design intent.

\subsection{Formal Definition}

Let $M_t$ denote the model running in the active inference pathway at time $t$.
$M_t$ consists of a weight configuration $\theta_t$, a PHG structural
certificate $\mathcal{C}(\theta_t)$ produced by the Composer compiler's
elaboration pass, and a version record $\mathcal{V}_t$ signed by the
organization's post-quantum credential.

\begin{definition}[Warm Rotation]
A \emph{warm rotation} is a transition $M_t \to M_{t'}$ where:
\begin{enumerate}[leftmargin=1.5em]
\item $M_{t'}$ was trained using available compute capacity, whether concurrent
  with active inference, during periods of reduced ambient utilization, or
  during any interval in which the hardware budget is not fully committed to
  inference;
\item $\mathcal{C}(\theta_{t'})$ is present and valid: the Composer compiler has
  elaborated the updated weight configuration and discharged all SMT-LIB2
  proof obligations for dimensional consistency, grade correctness, and
  representation adequacy;
\item $\mathcal{V}_{t'}$ records the training provenance, the distribution shift
  trigger, and the structural certificate, signed by the organization's
  post-quantum key;
\item the transition is atomic with respect to inference requests: no request
  observes a partial state during the model exchange.
\end{enumerate}
\end{definition}

Condition (1) establishes that warm rotation does not require taking the
inference pathway offline, nor does it require a dedicated training window.
The qualifying condition is simply that the training computation for $M_{t'}$
completes before the rotation is initiated: whether that computation ran
concurrently alongside inference, during off-hours when utilization was low,
or opportunistically across intermittent idle intervals is an operational
scheduling decision that the architecture accommodates without modification. Crucially, the
training computation need not occur on the same hardware as active inference.
A side-load deployment, whether a container provisioned at the network edge,
a home server receiving a bootstrap instruction over a secure channel, or any
compute node reachable within the operating organization's trust boundary, can
receive the training task and return a candidate weight configuration for
PHG elaboration and rotation. The memory parity of forward-mode training
relative to inference is the hardware feasibility condition that makes
co-location viable when it is convenient: a device running $M_t$ in inference
at $k$ TOPS has $(\mathrm{Total} - k)$ TOPS available for concurrent training,
and the same device at rest has its full compute budget available for
accelerated training. The architecture does not require this co-location; it
merely permits it, which is the distinguishing property. In either case the
memory footprint of training is bounded to approximately twice the inference
footprint, making the training computation a well-behaved workload that can
be suspended, resumed, and migrated across available compute resources without
architectural special-casing.

Condition (2) is the correctness condition that distinguishes warm rotation
from unverified model replacement. The PHG structural certificate is a formal
proof that the updated weight configuration satisfies the structural invariants
the architecture requires, grounded in the elaboration pass's discharged
SMT-LIB2 obligations. A weight configuration that fails PHG elaboration cannot
be rotated into the active pathway.

Conditions (3) and (4) are the trust and consistency conditions. The version
record anchors the transition to its evidence chain. The atomicity requirement
is implementable through the Olivier model's message-passing semantics: the
active model actor processes all in-flight requests before acknowledging the
rotation, and new requests are buffered and replayed against $M_{t'}$ after
the transition.

\subsection{Memory Feasibility Across Hardware Targets}

The memory parity argument developed in Section~4.1 holds across hardware
targets because it derives from the absence of the activation tape, which
is a property of the training algorithm rather than a property of any
specific execution substrate. Whether the inference pathway runs on a
general-purpose CPU, a GPU, an NPU tile array, a CGRA, or a neuromorphic
processor, the forward-mode training pass requires approximately twice the
inference memory footprint, independent of the silicon architecture. This
is an algorithm-level property that hardware adaptation can accelerate but
cannot change in character. The framework's feasibility argument therefore
applies to any target for which inference is viable, without modification
for each architecture class.

Figure~\ref{fig:rotation} illustrates the memory and compute allocation during
a warm rotation event on a representative 50~TOPS inference-class accelerator.

\begin{figure}[h]
\centering
\begin{tikzpicture}[node distance=0.6cm and 1.4cm]
  \node[wbox] (inf) {Active inference\\[-2pt]{\footnotesize $M_t$, 20 TOPS}};
  \node[wbox, right=of inf] (spare) {Spare capacity\\[-2pt]{\footnotesize 30 TOPS available}};
  \node[wbox, below=of spare] (train) {Forward-mode training\\[-2pt]{\footnotesize $M_{t'}$, no tape}};
  \node[wbox, below=of train] (cert) {PHG elaboration\\[-2pt]{\footnotesize certificates discharged}};
  \node[wbox, below=of inf] (buf) {Request buffer\\[-2pt]{\footnotesize in-flight held}};
  \node[wbox, below=of buf] (rot) {Atomic rotation\\[-2pt]{\footnotesize $M_t \to M_{t'}$}};
  \node[cert, below=of rot] (ver) {\footnotesize Version record\\signed, post-quantum};

  \draw[narr] (inf) -- (buf);
  \draw[narr] (spare) -- (train);
  \draw[narr] (train) -- (cert);
  \draw[narr] (cert.west) -- +(-0.6,0) |- (rot.east);
  \draw[narr] (buf) -- (rot);
  \draw[narr] (rot) -- (ver);
\end{tikzpicture}
\caption{Warm rotation on a representative 50~TOPS inference-class accelerator.
Active inference occupies 20~TOPS. Forward-mode training of the candidate model
requires approximately $2\times$ the inference memory footprint (primal plus
tangent component), consuming roughly 20~TOPS of the spare 30~TOPS. PHG
elaboration discharges structural certificates before the atomic rotation.
In-flight requests are buffered and completed against the new model.}
\label{fig:rotation}
\end{figure}
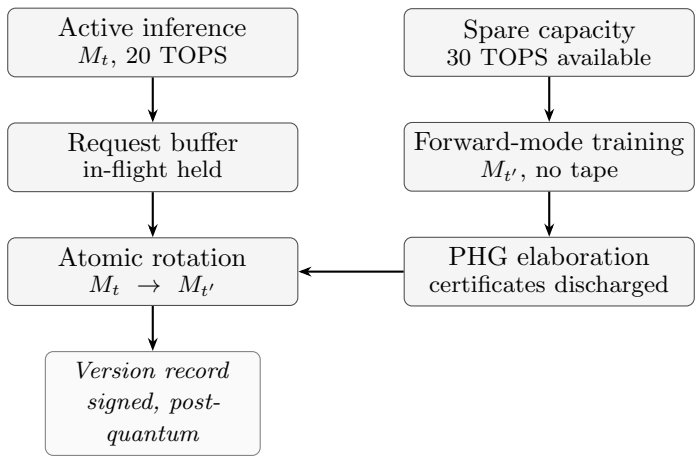

The critical observation is that the spare capacity (30~TOPS) is sufficient
for training because the training memory footprint is bounded to a fixed
constant multiple of the inference memory footprint, independent of model
depth. In reverse-mode training, the candidate model $M_{t'}$ would require
the activation tape in addition to the forward pass memory: total training
memory is $(1 + L)$ times inference memory for a network with $L$ layers. With
forward-mode training, total training memory is approximately $2\times$ inference
memory (primal plus tangent component), regardless of depth. For any device
where inference is feasible, forward-mode training is feasible within
approximately double the inference memory budget.

The $2\times$ bound covers point-estimate (MAP) training. The full posterior
approximation of Section~3.1 maintains a low-rank estimate of the Hessian's
leading eigenvectors in addition to the primal and tangent slots, for a total
memory cost of $(2 + r/d) \times$ the inference footprint, where $d$ is the
parameter dimension and $r$ is the retained rank. For the rank schedule
considered here ($r \ll d$, typically $r$ in the tens to low hundreds and
independent of depth), the additional term is small relative to the primal
and tangent components, and the qualitative bound remains constant-multiple
of inference memory. Reverse-mode Bayesian approximations with equivalent
curvature information require this rank on top of the activation tape, so the
asymptotic separation in model depth is preserved.

The b-posit arithmetic has a direct consequence for the spare capacity
calculation. The near-unity precision advantage of posit arithmetic means that
the effective precision of the forward-mode gradient estimate, for activations
concentrated near unity (which is the typical case for normalized network
activations), is higher than an IEEE-754 float32 estimate at the same bit width.
The gradient estimate requires fewer random projections to achieve a target
variance level, which reduces the effective compute cost of the training pass.
On a b-posit-capable accelerator, the 30~TOPS available for training is more productive
per TOPS than the equivalent IEEE-754 compute would be.

The choice of posit parameters for ML workloads warrants specific attention.
Gustafson addresses this directly in Chapter~13 of \textit{Every Bit Counts:
Posit Computing}~\cite{gustafson2024}, where
the parameterization for application-specific number systems is laid out in full.
The conclusion relevant here is that $e_S = 5$, appropriate for general
scientific computing with its wide dynamic range, is unnecessary for ML
inference and training: the dynamic range required is approximately
$[10^{-14}, 10^{1}]$, far narrower than the $[10^{-18}, 10^{18}]$ range of
general HPC workloads. The NUS work on 8-bit bounded posit demonstrated that
the optimal ML parameterization uses asymmetric $e_S$ and $r_S$ values across
the posit ring: values with magnitude less than 1 (the lower half of the ring
diagram) use different parameters from values with magnitude greater than or
equal to 1 (the upper half), biasing the precision distribution toward the
near-unity region where normalized activations concentrate. The exponent bias
is set so that the central high-precision region is at $2^{-2}$ or $2^{-3}$
rather than at unity, placing the maximum relative accuracy exactly where
the activation distribution peaks during training. The same NUS experiments
established a lower bound of 5 bits for classification accuracy: reducing
to 4 bits produces a sharp degradation in network accuracy, independent of
the parameterization. This 5-bit floor is a property of the information
content required for gradient signal propagation, not of the posit format
specifically, and it informs the minimum word width for the adaptive domain
model training substrate described in this paper.

A standard objection to inference hardware deployment concerns inference-time
quantization: large models must be post-training quantized to INT8, INT4, or
lower precision to fit available memory, and this quantization is inherently
lossy because the model was trained at higher precision than the deployment
target supports. Approaches such as BitNet~\cite{wang2023bitnet} address this
pressure by training in ternary weights from the outset, designing the model
for its hardware constraint rather than compressing it afterward. The approach
is principled and the results are competitive precisely because the model learns
within its precision constraint rather than having that constraint imposed on
weights that were never optimized for it.

ADM models face this deployment pressure differently, because the sources of
inference-time memory overhead are structurally different from the start. The
quantization pressure on large general models arises from density: a dense
parameter matrix requires high precision per element to represent the full range
of distinctions the model must make across all domains in its training corpus.
An ADM model's computational structure is sparse before any precision question
arises. The PHG grade inference eliminates structurally zero Cayley table
entries from the compiled computation entirely; they are absent from the
inference artifact, not compressed within it. The 85 to 95 percent sparsity
for 3D PGA computations cited in Section~5.1 is not a pruning ratio achieved
by removing small weights; it is the fraction of computations that were never
instantiated because the type system proved them unnecessary. The b-posit
parameterization then concentrates arithmetic precision in the near-unity
region where the domain's activation distribution actually lives, as
established by the NUS experiments above, rather than distributing uniform
precision across a range most activations never occupy.

The combined effect is that a well-parameterized ADM model achieves an
inference memory footprint competitive with aggressively quantized large
models in the same domain, without lossy precision reduction, because the
structural properties of the domain were encoded into the computation graph
at design time rather than learned from data and then compressed out. BitNet
and the ADM approach address the same deployment pressure from orthogonal
starting points: BitNet accepts the dense-model paradigm and reduces precision
radically through principled training; ADM reduces the structural density of
the computation before the precision question arises. The two approaches are
composable: a 1-bit or ternary ADM model would compound the memory reduction
from structural sparsity with the precision reduction of BitNet-style training.
Whether this combination preserves accuracy across physical domains is an open
empirical question, but the compositional path is architecturally clean.

\subsection{Cluster-Scale Warm Rotation}

The warm rotation argument extends naturally to deployments where multiple
domain actors are distributed across a cluster of compute nodes rather than
co-located on a single device. The memory parity property that makes warm
rotation feasible on a single device applies equally in the distributed case:
each actor's training computation requires approximately twice its inference
memory footprint, and that computation can be scheduled across available
cluster nodes with the same flexibility described in Section~4.1.

Distributed gradient accumulation across cluster workers is an accumulation
of directional derivative estimates, precisely the reduction the quire makes
exact. Current practice uses bfloat16 reduction and relies on statistical
averaging across workers to manage rounding error. Quire-backed reduction is
exact regardless of worker count, rounded once after full accumulation. For
training runs spanning large accelerator pools and many steps, this eliminates
a systematic source of gradient bias that current frameworks treat as
irreducible. The versioning infrastructure applies uniformly: each distributed
training run produces a candidate weight configuration that must pass PHG
elaboration before a warm rotation is initiated, and the version record anchors
the distributed provenance of the training evidence regardless of how many
nodes contributed to the gradient accumulation.

\section{Clifford Neural Networks Under the Principled Substrate}

\subsection{Grade Preservation Through Training}

Section~3.2 of~\cite{phg} established that grade inference derives the non-zero
Cayley table entries for geometric products at design time, and that PHG
saturation verifies the grade constraints of each geometric operation before
any arithmetic occurs. Section~2 of this paper established that IEEE-754
gradient accumulation corrupts the grade structure of trained weights in a
systematic and self-reinforcing way.

The composition of these two results yields the following:

\begin{proposition}[Grade Invariance of Forward-Mode Training]
Let $W$ be a weight parameter with declared grade $k$ in a PHG-typed Clifford
neural network. Let $\nabla_W \mathcal{L}$ be the gradient of the loss with
respect to $W$, computed via forward-mode autodiff with dual-number augmentation
and quire accumulation. Then $\nabla_W \mathcal{L}$ has grade $k$ as a
a design-time certificate from the type system, and the updated weight $W' = W - \eta \nabla_W
\mathcal{L}$ has grade $k$ for all learning rates $\eta$ and all inputs.
\end{proposition}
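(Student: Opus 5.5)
The proof reduces grade invariance to a linear-algebra fact about the ambient multivector space $\mathcal{G} = \bigoplus_j \mathcal{G}^j$, and then argues that the type system and the dual-number construction realize that fact. The core observation is this: $W$ is declared grade $k$, so PHG inference types $W$ as an element of the subspace $\mathcal{G}^k$, not of $\mathcal{G}$. The loss is then a function $\mathcal{L}:\mathcal{G}^k \to \mathbb{R}$ (after fixing the other parameters and the input). Its gradient, taken with respect to the inner product that $\mathcal{G}^k$ inherits from $\mathcal{G}$, is by definition an element of $\mathcal{G}^k$. Equivalently, it is the orthogonal projection $\langle \nabla^{\mathcal{G}} \mathcal{L}\rangle_k$ of any ambient gradient. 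This holds because the grade decomposition is orthogonal for the standard multivector inner product. Once that is established, closure of $\mathcal{G}^k$ under addition and real scalar multiplication gives $W' = W - \eta \nabla_W\mathcal{L} \in \mathcal{G}^k$ for every $\eta \in \mathbb{R}$ and every input. The ``all learning rates and all inputs'' clause is therefore automatic.

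The substantive work is showing that the forward-mode computation actually produces this element and nothing outside $\mathcal{G}^k$. I would proceed in three steps.

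\begin{enumerate}[leftmargin=1.5em]
\item Tangents for $W$ are drawn from $\mathcal{G}^k$. The dual-number augmentation pairs $W$ with a tangent $\dot W$ of the same declared grade, as the paper asserts for PHG inference on tangents. Each JVP is then $\langle \nabla_W \mathcal{L}, \dot W\rangle$ with $\dot W \in \mathcal{G}^k$.
\item Every intermediate is typed correctly, by structural induction over the PHG. Each primitive is either a geometric product restricted to the Cayley entries that grade inference permits, a grade projection, or a linear or pointwise scalar map. For each such primitive, the dual-number rule $(a+\varepsilon\dot a)(b+\varepsilon\dot b) = ab + \varepsilon(\dot a b + a\dot b)$ shows that the tangent lives in the same grade components the primal is typed at, since each summand uses the same Cayley entries. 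This is the ``chain rule closure'' the paper invokes, and it gives every intermediate tangent a certified grade type.
\item The estimator is reconstructed inside $\mathcal{G}^k$. It is either the single-tangent form $(\nabla\mathcal{L}\cdot v)\,v$ or the multi-tangent projection $P_U$ of Section~\ref{sec:multi-tangent-fg}. In both cases it is a linear combination of tangents in $\mathcal{G}^k$, so it lies in $\mathcal{G}^k$. The Gram matrix $G(V)$ is computed on $\mathcal{G}^k$-coordinates only, with quire-exact entries, so the combination coefficients are real scalars and introduce no other grades.
\end{enumerate}

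Quire accumulation enters only to guarantee that the retained coordinates are rounded once. It is not needed for grade purity, because absent coordinates have no storage and no arithmetic path, as Section~2.3 explains. Rounding therefore cannot deposit mass at grades $j\neq k$.

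The main obstacle is step 2, specifically nonlinear operations. Examples are rotor sandwiches $RxR^{-1}$, normalization, and activations applied to multivector norms. For these it is not obvious syntactically that the derivative respects the grade typing of the primal. My first attempt would be to require, as a hypothesis drawn from the PHG's grade inference, that each primitive is typed as a smooth map between fixed direct sums of grade subspaces. Its derivative is then a linear map between the same subspaces, and the tangent inherits the type. For sandwich products, I would appeal to the paper's statement that rotor normalization is an SMT-discharged obligation, which makes the sandwich grade-preserving on the primal and hence, by differentiation, on the tangent. If some primitive cannot be given such a type, the proposition should be read as ``elaboration fails''. That outcome is consistent with the paper's claim that non-grade-preserving trajectories do not compile.
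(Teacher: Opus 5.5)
Your proposal is correct and follows the paper's own sketch: the tangent (projection) direction for $W$ inherits grade $k$ by type, each JVP is a real scalar, and the single- or multi-tangent estimate is a real linear combination of grade-$k$ directions, so closure of the grade-$k$ subspace makes $W'$ grade $k$ for every $\eta$ and every input. Steps~1 and~3 carry the grade claim on their own, so nothing depends on your step-2 rotor appeal. That is just as well, because $R\tilde R=1$ holds only on the rotor manifold and does not control the derivative $\dot R\,x\,\tilde R + R\,x\,\tilde{\dot R}$ along an arbitrary even direction $\dot R$: in $\mathbb{R}^{4,1}$, a grade-4 $\dot R$ at $R=1$ gives $2\,\dot R\wedge x$, which has grade~5.
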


\textit{Proof sketch.} The dual-number extension of a grade-$k$ primal is a
grade-$k$ tangent. PHG grade inference applies identically to the tangent
computation as to the primal. The forward-mode procedure of~\cite{baydin2022}
samples a random projection direction $v$ per step; under the PHG type system,
$v$ is drawn from the grade-$k$ subspace of the weight space, not from the
full multivector space. This constraint is discharged at elaboration time as
a consequence of the grade type of $W$: the tangent slot associated with $W$
has the same grade as $W$ itself, and the projection vector that populates it
inherits that grade by type. The directional derivative $\langle \nabla_W
\mathcal{L}, v \rangle$ is then a scalar (grade-0) result, computed as the
quire accumulation of products between the grade-$k$ weight tangent components
and the grade-$k$ projection $v$. The gradient estimate $\nabla_W \mathcal{L}$
is the reconstruction of the weight-space vector from the scalar projection
estimates, and lies in the grade-$k$ subspace by construction because the
estimator is a linear combination of grade-$k$ projection vectors. Any
component at grade $j \neq k$ is a type violation caught at design time.
\hfill$\square$

\begin{corollary}[Sparsity Stability]
The non-zero Cayley table entries determined by PHG grade inference at compile
time are the same after $n$ training steps as at initialization, for all $n$.
\end{corollary}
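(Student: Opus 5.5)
The argument is an induction on the training step $n$, with the Grade Invariance proposition as the inductive engine. The key point is that the set of non-zero Cayley entries is determined by the grade types of the operands alone, not by their numerical values. So it suffices to show that the grade type of every weight is the same at step $n$ as at step $0$.

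First I would make the dependence explicit. For a geometric product $A_k B_m$ of a grade-$k$ and a grade-$m$ operand, PHG grade inference (Section~3.2 of~\cite{phg}) determines the admissible output grades $|k-m|, |k-m|+2, \dots, k+m$. It therefore determines the set $S(k,m)$ of basis-blade pairs $(e_I, e_J)$ whose Cayley entry can be non-zero. Write $\Sigma(\theta)$ for the union of these sets over all products in the network, given the declared grades of the parameters in $\theta$. The claim then reads $\Sigma(\theta^{(n)}) = \Sigma(\theta^{(0)})$, and since $\Sigma$ depends on $\theta$ only through the grade assignment $\theta \mapsto (\text{grade of each } W)$, it is enough to show that this assignment is invariant.

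The induction runs as follows. The base case $n=0$ is trivial. For the step, assume every weight $W^{(n)}$ has its declared grade $k_W$. The Grade Invariance proposition then gives $W^{(n+1)} = W^{(n)} - \eta \nabla_W \mathcal{L}$ of grade $k_W$ for every $\eta$ and every input. Activations are built from grade-typed weights and grade-typed inputs through grade-preserving operations, so their grades are also unchanged. The same holds for the tangent components, which are typed identically under the dual-number extension. Hence every product site has the same operand grades at step $n+1$, and $S(k,m)$ is unchanged site by site. I would note that this is a statement about the static type assignment. Because the weight update is well-typed at the declared grade, no new program is elaborated between steps, so the compiled artifact and its set of instantiated Cayley entries stay literally fixed.

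The main obstacle is the gap between ``structurally non-zero'' and ``numerically non-zero.'' The corollary concerns the set of entries that PHG inference permits. It does not say that a permitted entry never evaluates to zero for particular trained values, and I would state this explicitly to avoid overclaiming. A related subtlety is the step that converts the quire result back into a posit. The argument needs the fact that rounding acts coefficientwise on the admissible grade-$k$ components only, so it can never create a component at a grade with no arithmetic path. I would argue this from Section~2.3: structurally zero entries have no compiled path, so there is nothing to round into. I would therefore treat this as the one place where the corollary leans on the representation-adequacy obligation of condition~(2) in the warm rotation definition, rather than on grade inference alone.
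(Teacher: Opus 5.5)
Your argument is correct and follows the paper's route: the paper gives no separate proof, treating Sparsity Stability as an immediate consequence of the Grade Invariance proposition together with the fact that PHG grade inference fixes the admissible Cayley entries from operand grades alone, and your induction on $n$ spells out that step-by-step application. One change is needed. Drop the appeal to the representation-adequacy obligation of warm-rotation condition~(2), since that condition governs rotation into the inference pathway rather than individual training steps, and your own observation that structurally zero entries have no compiled storage or arithmetic path already settles the quire-to-posit rounding concern.
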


This corollary is the formal statement of the practical claim: a Clifford neural
network trained under the Fidelity framework retains its sparsity advantage
across the full training history. The $85\%$ to $95\%$ sparsity reported for
2D and 3D PGA computations in~\cite{flash-clifford} is a structural invariant
of the trained model, preserved by the grade type system at every training step.

\subsection{Equivariance as a Structural Guarantee}

The sandwich product $X \mapsto R X \tilde{R}$, where $R$ is a rotor satisfying
$R\tilde{R} = 1$, is exactly equivariant under the Lie group of the algebra's
metric by construction. For PGA ($\mathbb{R}^{3,0,1}$) this is SE(3); for STA
($\mathbb{R}^{1,3,0}$) it is the Lorentz group.

In IEEE-754, a ``rotor'' weight initialized to satisfy $R\tilde{R} = 1$
accumulates odd-grade contamination through gradient updates and ceases to be
an exact rotor after training. The sandwich product then provides approximate
equivariance, which is a statistical regularity the network has preserved under
the training dynamics rather than a structural guarantee. For distribution shifts
that expose regions of the input space not well-covered by training data, the
approximate equivariance may fail.

Under the PHG grade type system and forward-mode training, a rotor weight is
a type-constrained even-grade element. The unit norm constraint $R\tilde{R} = 1$
is a dimensional constraint in the DTS sense: it is an invariant over the
algebra's inner product that the SMT-LIB2 verification infrastructure can
express and discharge at elaboration time. Rotor normalization is verified at design time, and the equivariance of the trained model holds by structural guarantee for all inputs, with the realized symmetry error bounded by the b-posit rounding of the rotor application.

\subsection{Mixed-Algebra and Physics-Structured Architectures}

Grade preservation enables network architectures whose grade requirements
IEEE-754 arithmetic cannot maintain through training. Two cases are directly
relevant.

\textbf{Mixed-algebra networks.} A network whose early layers operate in PGA
for spatial geometry and whose later layers operate in CGA for sphere and circle
detection can be represented in the PHG with algebra transition hyperedges at
the boundary. The PHG verifies that the representation conversion from a grade-3
PGA point to its CGA embedding preserves the geometric identity, verifiable at design time.
This was not expressible in previous frameworks because no previous framework
had grade as a first-class type-level property at the compiler level; the
conversion would be a floating-point matrix multiply with no algebraic provenance.

\textbf{Physics-structured networks.} In a physics-informed network~\cite{raissi2019} where
individual layers correspond to physical operators (gradient, divergence, curl,
Hodge dual), the grade rule of each operator is a type constraint. The gradient
of a scalar pressure field $\langle \mathrm{Pa} \rangle$ is a grade-1 vector
field $\langle \mathrm{Pa\,m^{-1}} \rangle$. A layer that accidentally applies
a divergence where a gradient is intended is a type error, caught at design
time. The trained model's physical operator structure is verifiable in the
same elaboration pass that verifies its geometric grade structure.

\section{Spiking Neural Networks and the Temporal Hyperedge}

\subsection{Temporal Sparsity as a PHG Dimension}

The spatial sparsity in Clifford algebra computation (grade-determined Cayley
zeros) and the temporal sparsity in spiking neural networks (silence between
spikes) are instances of the same structural property in the PHG, expressed
in different dimension axes.

Clifford sparsity is grade-dimensional: the PHG grade annotation determines
which computation nodes are absent before arithmetic occurs.

Neuromorphic temporal sparsity is time-dimensional: the PHG temporal annotation
on spike events determines which computation nodes are active at which times. At
any moment, only a small fraction of neurons in a spiking network fire; the
remainder are silent and consume negligible power. This temporal sparsity is the
source of the three-order-of-magnitude energy efficiency advantage of Loihi-2
relative to GPU for matched workloads, for the same reason that Clifford spatial
sparsity is the source of the $20\times$ arithmetic reduction: doing nothing,
exactly, is different from computing a small value that is then discarded.

In the DTS, time is a dimension axis with annotation $\langle \mathrm{ms}
\rangle$. The membrane time constant $\tau_m$ of a LIF neuron carries this
annotation. The synaptic weight carries the derived annotation $\langle
\mathrm{mV\,ms^{-1}} \rangle$. A spiking network whose temporal dynamics
are dimensionally inconsistent across layers, where the inter-spike intervals
expected by one layer do not match the firing rates produced by the previous
layer, is a design-time error under DTS dimensional inference. No current
neuromorphic programming framework provides this property.

\subsection{Coincidence Detection as a Hyperedge}

The Leaky Integrate-and-Fire neuron model is:
\[
\tau_m \frac{dV_m}{dt} = -V_m + \sum_i w_i \sum_k \delta(t - t_i^k)
\]
where $\tau_m \langle \mathrm{ms} \rangle$ is the membrane time constant,
$w_i \langle \mathrm{mV\,ms^{-1}} \rangle$ are synaptic weights, and $t_i^k$
are presynaptic spike arrival times. A neuron fires when $V_m$ reaches
threshold $V_\theta \langle \mathrm{mV} \rangle$.

Coincidence detection, the fundamental computational primitive, occurs when
$k$ presynaptic inputs spike within temporal window $\tau$. This is a joint
property of the full $k$-tuple of arriving spikes. The PHG represents it as
a $k$-to-1 hyperedge:
\[
f = \bigl(\{s_1, \ldots, s_k\},\; \mathrm{fire},\; \lambda_\tau\bigr)
\]
where $\lambda_\tau$ carries the constraint $\max_i t_i - \min_i t_i \leq
\tau\langle \mathrm{ms} \rangle$. PHG saturation fires the target node only
when all $k$ source spike nodes are present and the temporal constraint is
satisfied. Decomposing this into $\binom{k}{2}$ binary edges loses the joint
temporal constraint: pairwise co-occurrence does not entail $k$-way
co-occurrence within the window.

\subsection{STDP and the Local Learning Coeffect Signature}

Spike Timing Dependent Plasticity updates the weight $w_{ij}$ between
presynaptic neuron $i$ and postsynaptic neuron $j$ as:
\[
\Delta w_{ij} = \begin{cases}
  A_+ \exp(-\Delta t / \tau_+) & \Delta t > 0\\
  -A_- \exp(\Delta t / \tau_-) & \Delta t < 0
\end{cases}
\]
where $\Delta t = t_{\mathrm{post}} - t_{\mathrm{pre}}$ and all time constants
carry annotation $\langle \mathrm{ms} \rangle$.

The coeffect signature of STDP is identical in structure to forward-mode
autodiff: local parameter updates depending only on information available at
the synapse, $O(1)$ auxiliary state per parameter (one trace variable per
spike direction), no global error signal, and no activation tape. Loihi-2
implements STDP in hardware via its on-chip learning engine, which maintains
trace variables without off-chip memory access. The DMM coeffect discipline
verifies that STDP trace variables are stack-eligible with the same machinery
that verifies forward-mode tangent components.

\begin{proposition}[Unified Local Learning Signature]
Forward-mode autodiff over loss-function models and STDP over spiking neural
networks share a common coeffect signature: (a) no global error signal or
activation tape; (b) $O(1)$ auxiliary state per parameter; (c) update
computations that are stack-eligible under DMM coeffect inference; (d) inner
products or accumulations that are exact under quire semantics where the
hardware capability coeffect is available.
\end{proposition}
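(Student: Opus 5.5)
The plan is to verify the four clauses (a)--(d) separately for each of the two learning rules. For each rule I will write out the per-step update as an explicit local program and read off its coeffect signature, reusing the forward-mode analysis of~\cite{dts-dmm} as the template.

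For forward-mode autodiff, I will take the single-tangent (or $k$-tangent, Section~\ref{sec:multi-tangent-fg}) procedure of~\cite{baydin2022} as given.
\begin{itemize}
\item Clause (a) follows because the JVP is computed in one forward sweep. Each layer consumes only its incoming primal/tangent pair and emits an outgoing pair, so nothing is retained for a later backward pass. The one global quantity is the scalar directional derivative at the output. I will handle it by arguing that this scalar is a broadcast scalar, not a per-layer error signal routed through stored activations. To do so, I will separate ``global error signal'' in the backpropagation sense (a per-unit credit vector) from a shared scalar modulator.
\item Clause (b) is the $O(1)$-per-layer auxiliary memory result of~\cite{dts-dmm}, restated per parameter: one tangent slot and one update slot per weight, or $k$ slots for fixed $k$.
\item Clause (c) is the escape analysis already invoked in Section~\ref{sec:multi-tangent-fg}.
\item Clause (d) is the quire inner-product result of~\cite{dts-dmm}, conditioned on the hardware capability coeffect.
\end{itemize}

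For STDP, I will work from the update rule displayed above, implemented through the standard pre- and post-synaptic trace recurrences $x_i \leftarrow x_i e^{-\Delta/\tau_+} + \sum_k \delta(t-t_i^k)$, and similarly for $y_j$ with $\tau_-$. Under these recurrences $\Delta w_{ij}$ is a function of $(x_i, y_j, \text{spike events at } i,j)$ only.
\begin{itemize}
\item Clause (a) is immediate: there is no loss, no error signal, and no stored history beyond the traces.
\item Clause (b) follows because there are two traces per synapse, or per neuron, amortized. In either case the state is $O(1)$ per parameter.
\item Clause (c) needs the DMM argument. Trace variables have lifetimes bounded by the synapse actor's update loop and never escape it, so the same lifetime rule that certifies tangent slots certifies them.
\item Clause (d) requires noting that the accumulations in STDP are sums $\sum_k A_\pm e^{\mp\Delta t/\tau_\pm}$ over spike pairs. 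These are accumulations of products of posit-representable quantities, so they fall under quire semantics exactly when the capability coeffect is present. The exponential itself is rounded once per term, and I will state this caveat explicitly: exactness is claimed for the accumulation, not for transcendental evaluation.
\end{itemize}

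The final step is to observe that the four properties coincide as coeffect annotations in the DTS/DMM lattice, which makes them a \emph{common} signature rather than merely analogous ones. Dimensional consistency of the traces (annotation $\langle\mathrm{ms}\rangle$ on $\tau_\pm$, dimensionless exponent) follows from DTS inference as in Section~6.1.

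I expect the main obstacle to be clause (a) for forward mode. The forward gradient multiplies every tangent by a single output-level scalar, so it is not literally free of global information. I will resolve this by defining ``global error signal'' precisely as a per-parameter credit assignment requiring transport of stored activations. Under that definition, a single broadcast scalar is treated as a third-factor modulator, analogous to reward-modulated STDP. This reading keeps the proposition true while being honest about what ``local'' means.
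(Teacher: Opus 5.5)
The paper gives no separate proof. The prose around the proposition runs the same clause-by-clause check you propose: it imports the forward-mode facts from the DTS/DMM analysis, takes one trace per spike direction as STDP's auxiliary state, and asserts that DMM certifies traces ``with the same machinery'' as tangents. Your treatment of (a) is more careful than the paper's. The paper credits the shared signature with updates ``depending only on information available at the synapse,'' which overlooks that every forward-gradient update is scaled by the global scalar $\langle\nabla\mathcal{L},v\rangle$. Your third-factor reading is what makes (a) true for forward mode at all. However, parts of your argument do not go through as written.

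First, the STDP half uses two different realizations. You get (b) and (c) from the trace recurrence, but you argue (d) from the pairwise sum $\sum_k A_\pm e^{\mp\Delta t/\tau_\pm}$ with each term rounded once. Under the trace recurrence that sum is never formed term by term. The trace $x_i$ is a running posit that is re-rounded at every decay update, because a quire supports fused multiply--accumulate but not rescaling its contents by $e^{-\Delta/\tau_+}$. So $x_i$ carries a rounding error from every update, and the accumulation is not exact even modulo per-term rounding. Forming the pairwise sum in a quire instead needs the spike history, which breaks (b). You need one realization for all four clauses. One option is to keep traces and restrict (d) to the quire accumulation of the increments $A_+x_i$ and $-A_-y_j$ into the weight (one quire per synapse, still $O(1)$), widening your caveat from the exponential to the trace dynamics. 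The other is to store last spike times (nearest-neighbour pairing), so that each increment really is a single once-rounded exponential.

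Second, your closing step claims more than your own clause (a) permits. A coeffect records what a computation demands of its context. The forward-gradient update demands a scalar produced at the network output and broadcast to every parameter, while plain STDP demands nothing beyond the synapse. The two annotations coincide only in a lattice that deliberately forgets broadcast scalars, and you would have to exhibit that lattice. Without it, the defensible conclusion is the paper's own gloss that the rules are ``computationally analogous.''

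The same caution applies to (c). Tangent slots die at the end of each step, whereas traces are loop-carried across every iteration (every message, if the synapse is an actor). Their non-escape certificate must therefore be issued at the scope enclosing the entire loop, not borrowed from the per-step rule that certifies tangents.
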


The proposition establishes a structural classification: the two learning
rules are computationally analogous in their coeffect signatures. Both
achieve gradient-free (in the tape sense) local learning with bounded auxiliary
state, and both have their accumulation operations made exact by the quire.
The warm rotation and versioning infrastructure, as specified in Section~4,
is designed to apply uniformly to both modalities.

\subsection{Hybrid Geometric-Neuromorphic Networks}

Grade-typed Clifford representations and spike-coded neuromorphic representations
are complementary in a specific way. Clifford algebra excels at encoding
continuous geometric structure (spatial relationships, physical fields, manifold
topology) with exact algebraic properties. Spiking representations excel at
encoding temporal patterns, event sequences, and sparse coincidence structure
with extreme energy efficiency.

A hybrid network with Clifford algebra layers for geometric feature extraction
and spiking layers for temporal pattern recognition is a single PHG with both
grade annotations (on Clifford nodes) and temporal annotations (on spike nodes).
The interface between the two regimes is a representation conversion hyperedge:
a Clifford layer's continuous-valued grade-1 output is encoded as a spike rate
or timing code for the neuromorphic layer's input. The PHG verifies that this
conversion is dimensionally consistent: the spike rate annotation carries a
dimensional annotation derived from the Clifford layer's output dimension and
the encoding function's dimensional structure.

The PHG's per-target reachability bitvector handles the hardware partitioning:
Clifford nodes are XDNA-reachable~\cite{rico2024} (route to MLIR-AIE~\cite{amd-aie} lowering), spike nodes are
Loihi-2-reachable (route to NxCore API emission), and the conversion nodes are
CPU-reachable or implement a DMA transfer with dimensional type preservation via
BAREWire. This is the compilation model for a heterogeneous AI Engine that uses
each substrate for what it does best, from a single shared intermediate
representation. Figure~\ref{fig:hybrid-phg} illustrates the PHG structure and
hardware partitioning for this hybrid architecture.

\begin{figure}[h]
\centering
\begin{tikzpicture}[node distance=0.55cm and 1.1cm]
  \node[box, fill=gray!12, text width=2.5cm] (c1) {Clifford input\\[-2pt]{\footnotesize grade-1, $\langle\mathrm{m}\rangle$}};
  \node[box, fill=gray!12, text width=2.5cm, below=of c1] (c2) {Geometric product\\[-2pt]{\footnotesize grade-2 bivector}};
  \node[box, fill=gray!12, text width=2.5cm, below=of c2] (c3) {Rotor sandwich\\[-2pt]{\footnotesize grade-preserving}};

  \node[box, fill=white, draw=black!40, text width=2.3cm,
        right=1.2cm of c2] (conv) {Rate encoding\\[-2pt]{\footnotesize BAREWire DMA\\dim.\ annotated}};

  \node[box, fill=gray!6, text width=2.5cm,
        right=1.2cm of conv, yshift=0.6cm] (s1) {Spike layer 1\\[-2pt]{\footnotesize $\tau_m\langle\mathrm{ms}\rangle$}};
  \node[box, fill=gray!6, text width=2.5cm,
        right=1.2cm of conv, yshift=-0.6cm] (s2) {Spike layer 2\\[-2pt]{\footnotesize coincidence $k{=}3$}};

  \node[above=0.3cm of c1, font=\small\itshape, text=black!60] {XDNA-2 (MLIR-AIE)};
  \node[above=0.3cm of s1, font=\small\itshape, text=black!60, xshift=0.1cm] {Loihi-2 (NxCore)};
  \node[below=0.25cm of conv, font=\footnotesize\itshape, text=black!50] {CPU / DMA};

  \draw[narr] (c1) -- (c2);
  \draw[narr] (c2) -- (c3);

  \draw[darr] (c3.east) -- (conv.west);

  \draw[narr] (conv.east) -- (s1.west);
  \draw[narr] (conv.east) -- (s2.west);

  \begin{scope}[on background layer]
    \node[draw=black!25, rounded corners=4pt, fit=(c1)(c2)(c3),
          inner sep=6pt, fill=gray!4] {};
    \node[draw=black!25, rounded corners=4pt, fit=(s1)(s2),
          inner sep=6pt, fill=gray!4] {};
  \end{scope}
\end{tikzpicture}
\caption{PHG structure of a hybrid geometric-neuromorphic network. Clifford
algebra nodes (left, XDNA-2 target) carry grade and dimensional annotations.
A rate-encoding conversion hyperedge (centre, CPU or DMA via BAREWire) bridges
to spiking layers (right, Loihi-2 target) carrying temporal and coincidence
annotations. The per-target reachability bitvector in the PHG routes each
subgraph to its appropriate lowering path from a single intermediate
representation.}
\label{fig:hybrid-phg}
\end{figure}
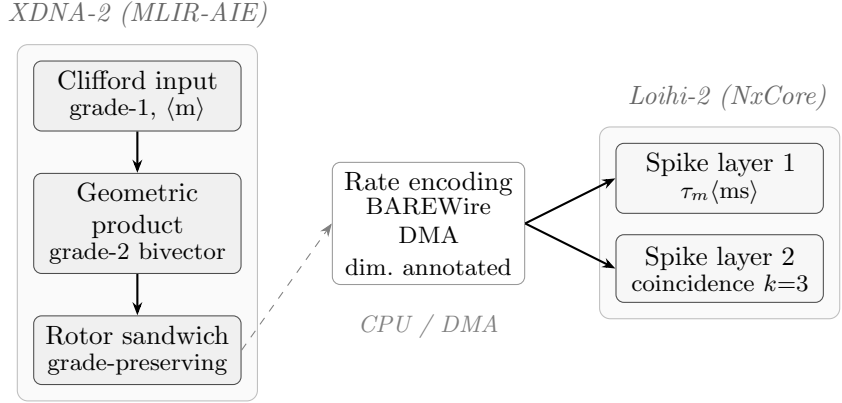

\section{The Versioning and Trust Infrastructure}

\subsection{PHG Certificate Differencing}

PHG certificate differencing is a design-stage capability: the formal
machinery for producing and comparing structural certificates exists within
the Composer elaboration pipeline, but the versioning layer that records,
stores, and surfaces certificate diffs across model transitions is under
active development. What follows describes the intended behavior of this
infrastructure and the properties it is designed to provide.

Each model version is intended to carry a PHG structural certificate: a set
of elaborated annotations covering dimensional consistency, grade correctness,
topological consistency, co-location feasibility, and representation adequacy.
A version transition $M_t \to M_{t'}$ would produce a certificate diff: the
set of annotations that changed between versions.

Under our warm rotation model, a transition driven by new training data in
a specific physical domain produces a certificate diff that identifies exactly
which subgraph of the PHG changed and in what respect. If a force-balance expert added a thermal coupling term, the
diff shows that a new inter-domain hyperedge was added between the force and
thermal subgraphs, with the specific dimensional constraint it encodes. This is
a human-interpretable description of a functional change, derived from the
model's structural properties: a proof-level record of what structural
commitments changed, grounded in the elaborated annotations rather than
in the weight tensors themselves, which carry no interpretable information
about semantic intent.

\subsection{Post-Quantum Version Signing}

The formalism applied to version signing is intended to match the stakes of
the deployment environment, not to impose a uniform requirement across all
uses of the architecture. At the minimal end, a warm rotation record may
carry nothing more than a timestamp, a hash of the updated weight
configuration, and the PHG certificate diff, sufficient for a developer
tracking their own local model updates with no adversarial threat model.
At the other end, a deployment in a contested environment, whether a
critical infrastructure system, an air-gapped clinical device, or a
security-sensitive organizational context, requires a record whose
authenticity can be verified by any party with access to the organization's
public key, whose integrity is durable against cryptographic capabilities
not yet available today, and whose provenance chain is auditable without
trusting any single intermediary. The architecture is designed to support
both ends of this range and the continuous gradient between them: the
version record is a structured object whose signing policy is a deployment
configuration, scaled to the security posture the operating environment
requires.

For deployments operating under a Zero Trust security model, version records
are intended to be signed by the operating organization's post-quantum key
via the QuantumCredential infrastructure. The signature anchors the version
record to the key at the time of the warm rotation, making the record
tamper-evident against adversaries with access to current and anticipated
future cryptographic capabilities.

The intended trust chain from training data through model structure through
inference output is as follows: operational evidence is collected with
timestamped provenance; the Bayesian update is triggered when the KL divergence
threshold is crossed; the forward-mode training run produces a candidate weight
configuration; Composer elaborates the configuration and discharges PHG proof
obligations; the version record is constructed from the training provenance,
the PHG certificate, and the certificate diff; the record is signed according
to the deployment's configured signing policy; the warm rotation is executed.
When fully realized, every step in this chain would be recorded and
independently verifiable at the level of assurance the deployment requires.
A consumer of the model system would be able to verify the structural
properties of the running model, the evidence that drove the last version
transition, and the integrity of the version record, without running the
model against adversarial test cases.

Contested deployment environments require a trust model grounded in
demonstrable provenance. The version record described here provides exactly
that: an auditable chain, anchored to the organization's key policy, that
answers the questions such environments demand. What is the model structurally
capable of? What evidence drove the last transition? When did it occur, and
under whose authority? Benchmark performance against a curated test set
cannot answer these questions. A cryptographically signed, PHG-diffed version
record can.

\subsection{Scope Bounds and the World Model Framing}

A domain-specific model under the Fidelity framework is bounded in scope by
its dimensional type annotations and PHG grade constraints. These bounds are
not limitations to be overcome; they are the properties that make the model
trustworthy. A structural health monitoring model cannot learn to perform
medical diagnosis because the dimensional constraints of its domain do not
encompass medical measurement types. A fluid dynamics model cannot produce
outputs in units of currency because the DTS would catch the dimensional
inconsistency at the point where such an output was declared.

This scope boundedness addresses the objection that domain-specific models
are merely ``small models'' insufficient for real-world complexity. The
relevant comparison is not parameter count but the correctness of the
model's outputs within its problem domain. For any problem with physical
structure, an adaptive domain model whose dimensional annotations and grade
constraints are proven consistent is strictly more reliable than a large
general model for that problem, not because it is larger but because its
structural properties are verified rather than approximated. General-purpose
large models distribute their capacity across all domains and verify nothing.
Adaptive domain models concentrate their capacity and carry verified
structural properties throughout their operational lifetime.

Composability restores the breadth that scope boundedness removes. Clef is a
concurrent programming language in the ML family whose actor model is central
to its design. In this model, Prospero is the supervisor actor responsible for
orchestration and arena lifetime management; Olivier actors are the domain
computation units that Prospero oversees. A deployed system composes multiple
Olivier actors, each running a domain model scoped to its annotated problem
space, with inter-actor communication carried through BAREWire with dimensional
type annotations preserved across message boundaries. The PHG hyperedge
structure encodes the inter-domain constraints, verifiable by the same
saturation machinery that verifies intra-domain constraints within each actor.
The aggregate capability of such a system is broad by composition, while each
component retains the verified properties of its domain: dimensional
consistency, grade correctness where applicable, and elaborated structural
certificates.

The scope boundedness and closed-system training properties of this
architecture have a specific consequence for organizations whose operational
data carries sensitivity constraints. An organization whose domain models
are trained exclusively on internally curated data, whose training provenance
is recorded in cryptographically signed version records, and whose model
updates are triggered by distribution shift within a bounded operational
context, rather than by general corpus updates from external sources, retains
full custodianship of the knowledge embedded in its models. Proprietary
operational data, trade-sensitive process knowledge, and curated internal
information corpora do not need to leave the organization's trust boundary
at any stage of the training, distillation, or rotation pipeline. The
Bayesian distillation mechanism allows such an organization to seed its
domain priors from general-purpose models without exposing its operational
data to those models; the warm rotation mechanism ensures that model updates
are verified against the organization's own evidence provenance before
entering the active pathway. The parametric integrity of the resulting
models, that their structural properties are proven consistent with the
domains they were trained on and the data that drove each version transition,
is a property the organization can demonstrate to its own auditors, regulators,
or counterparties without disclosing the underlying data itself. This is the
architecture that makes private AI tractable as an engineering discipline
rather than an aspiration bounded by the capabilities of hosted general
models.

This is the architecture through which genuinely ambient,
genuinely trusted AI becomes an engineering challenge with a clear solution path.

\section{Related Work}
\label{sec:related}

\subsection{Bayesian Teaching and Behavioral Approximation}

Van Steenkiste and Linzen~\cite{vansteenkiste2026} establish that Bayesian
reasoning behavior in large language models requires deliberate structural
introduction, and that large-scale pretraining installs latent Bayesian
structure that can be elicited through targeted fine-tuning. Their Bayesian
teaching framework constructs a symbolic optimal Bayesian model for a given
task domain, generates fine-tuning data from that model's outputs, and trains
the LLM to approximate those outputs. The resulting models generalize improved
Bayesian behavior across domains, including domains not seen during fine-tuning.

The contribution is empirically significant and the result establishes a
precondition on which Section~3.4 of this paper depends: that latent Bayesian
structure in general-purpose LLMs is accessible and transferable. Their finding
that this structure generalizes across domains is what makes Bayesian
distillation tractable as a prior initialization mechanism for the ADM
architecture. In this sense, the two bodies of work are complementary rather
than parallel: Bayesian teaching demonstrates that the latent prior is real and
accessible; Bayesian distillation proposes to extract and formalize it under
the structural constraints of a target domain's type system.

The architectural distinction between the two approaches remains important.
Bayesian teaching produces behavioral approximation: the fine-tuned model
learns to produce outputs that correlate with an optimal Bayesian agent's
outputs on the training distribution. No formal prior structure is recoverable
from the LLM's weight space after fine-tuning, no posterior update procedure
is certifiable, and no versioning record connects the model's current behavior
to the evidence that shaped it. The ADM Bayesian distillation mechanism uses
the general model's latent structure as a starting point and then imposes
the DTS and PHG constraints as a formalizing filter, producing a domain model
whose structural properties are proven at design time and whose provenance is
cryptographically anchored. The general model contributes the prior's breadth;
the type system contributes its precision and certifiability.

The Bayesian assistant that Van Steenkiste and Linzen construct is, in the
language of this paper, a domain model: a computational agent whose parameters
are a probability distribution over domain features, updated by Bayes' rule
as evidence arrives. The structural identification is exact. The Bayesian
assistant works because the domain is simple enough to specify completely.
The adaptive domain model architecture generalizes this pattern to domains
with richer geometric and physical structure, with the prior expressed through
the type system, the posterior update implemented through forward-mode autodiff
with quire accumulation, and the update trigger formalized as a KL divergence
criterion calibrated to the domain's dimensional annotations. For physically
structured domains where correctness guarantees matter, the structural approach
is the appropriate one; for general-purpose language tasks, behavioral
approximation remains viable precisely because the prior structure of natural
language does not admit compact formal specification.

\subsection{Geometric Algebra Neural Networks}

The Clifford Group Equivariant Neural Network (CGENN) work of Ruhe et al.\
\cite{ruhe2023} and the Clifford-Steerable CNN work of Zhdanov et al.\
\cite{zhdanov2024} establish the theoretical and empirical case for
geometric algebra as an architectural substrate for equivariant learning
on physical simulation tasks. The Flash Clifford implementation~\cite{flash-clifford}
addresses the runtime performance gap through fused kernels and warp-aligned
memory layout. Section~5 of this paper develops the argument that these
approaches are constrained by the IEEE-754 substrate: the grade preservation
properties that make Clifford algebra networks equivariant by construction
cannot survive training in IEEE-754 arithmetic without manual architectural
intervention, and the Cayley table sparsity that provides the computational
advantage cannot be reliably exploited when training corrupts the grade
structure of weights. The PHG grade type system and forward-mode training
substrate resolve this at the architectural level.

\subsection{Neuromorphic Programming Frameworks}

Intel's Lava framework, PyNN, and NEST provide programming models for spiking
neural network hardware at varying levels of abstraction. None provides
grade or temporal dimensional annotations as type-level properties, none
verifies the consistency of temporal dynamics across network layers, and
none integrates with a compilation pipeline that carries semantic information
through to hardware configuration. The relationship between STDP's local
learning signature and forward-mode autodiff's coeffect structure, established
in Section~6.3, is not represented in any existing neuromorphic programming
framework. The consequence is that spiking networks and gradient-descent
networks are treated as categorically separate architectures requiring
separate toolchains, where the present work establishes them as instances
of the same adaptive domain model architecture.

\subsection{The Bitter Lesson and Its Preconditions}

Sutton's bitter lesson~\cite{sutton2019} crystallizes a pattern documented
across machine learning domains for more than fifteen years: general methods
that leverage computation and scale consistently outperform hand-crafted,
knowledge-engineered approaches given sufficient data. Banko and
Brill~\cite{banko2001} demonstrated this formally for natural language
disambiguation in 2001, showing that the best algorithms at low data
underperformed the worst algorithms with orders of magnitude more data.
The pattern repeated across image synthesis, speech recognition, game
playing, and general reasoning tasks. Halevy, Norvig, and Pereira's
survey~\cite{halevy2009} generalized the finding across domains, and the
lesson's recurrence over successive generations of ML practitioners gave
it its name.

The ADM framework does not contradict the bitter lesson. It operates in the
regime where the lesson's precondition does not hold. The bitter lesson applies
with full force to domains where the world model is implicit, vast, and not
formally characterizable: natural language, general image understanding,
broad reasoning. For these domains, there is no compact formal description
of the problem's structure that can substitute for data at scale, and general
methods leveraging computation reach higher asymptotes than structured approaches.

For physically structured domains, the precondition is different. The
dimensional constraint that force has dimension $\mathrm{kg \cdot m \cdot s^{-2}}$
is not a hand-crafted feature to be overcome by data. It is a fact about the
physical world that is known with certainty before any training example is
observed. Encoding it as a type-level constraint eliminates an entire class
of hypotheses from the learning problem, compressing the effective hypothesis
space in a way that data cannot improve upon. The DTS and PHG do not compete
with scale; they reduce the problem to one where less scale is required by
removing from consideration the hypotheses that the formal structure of the
domain already rules out. The Bayesian distillation mechanism reflects this
directly: the latent prior extracted from a large general model is approximately
right about physical domains, and the ADM training regime replaces the
approximation with the formally correct prior, raising the effective asymptote
without requiring additional data.

\subsection{Mixture of Experts, Agentic Frameworks, and the Actor Model}

The sparse MoE architecture of Shazeer et al.~\cite{shazeer2017} provides
the cluster-scale context for the warm rotation argument in Section~4.3.
The observation that forward-mode variance is tractable per active expert
when $P_{\mathrm{expert}}$ is compact by design, and that distributed
gradient accumulation is an exact operation under quire semantics, extends
the warm rotation argument from single-device inference hardware to
cluster-scale training without architectural modification.

The architectural relationship between the Fidelity actor model and both MoE
and current agentic AI frameworks warrants direct characterization, as it
locates this work in a space that neither existing paradigm occupies.

Sparse MoE achieves scalable specialization through a learned routing
distribution over a structurally uniform expert pool. All experts share
the same architecture, parameter scale, and input-output interface. The
routing mechanism learns which experts to activate for a given input, but
that routing is a statistical approximation: there is no structural guarantee
that a given expert will receive inputs consistent with whatever latent
specialization it has developed, and no formal mechanism to enforce that
each expert's parameter updates respect the dimensional or geometric
constraints of its intended domain. Expert collapse is a symptomatic failure
of this informality. The Fidelity actor model enforces specialization
structurally: each actor's domain is defined by its DTS dimensional
annotations and PHG grade constraints at compile time, and inputs that fall
outside those constraints are boundary violations detectable at the message
fabric level rather than routing failures discovered through degraded accuracy.

Current agentic AI frameworks, including the growing ecosystem of
LLM-orchestration and tool-use architectures, achieve compositional
flexibility by composing general-purpose models with external tools, memory
systems, and other agents through natural language or loosely typed API
calls. The correctness of inter-agent interactions is established through
prompt engineering, testing, and empirical observation. Domain boundaries
between agents are implicit in the natural language of their system prompts
rather than enforced by any structural mechanism. This informality is the
source of both the frameworks' flexibility and their fragility: a system
that relies on a language model's interpretation of "you are a medical
diagnosis assistant" for domain enforcement cannot provide the same
guarantees as a system where the medical domain's dimensional constraints
are enforced by the type system at every message boundary.

The Fidelity actor model occupies the principled position between these two
poles. It achieves the heterogeneous specialization that MoE targets, with
domain boundaries enforced structurally rather than approximated statistically.
It achieves the compositional flexibility that agentic frameworks target,
with inter-component constraints made explicit in the PHG at design time
rather than left implicit in natural language at runtime. The correctness
properties of the composed system are established before deployment, not
discovered through operational experience.

\section{Future Work}

\subsection{Bayesian Distillation: Empirical Validation and Scope}

The Bayesian distillation mechanism proposed in Section~3.4 requires empirical
validation across a range of physical domains and source models. The central
open questions are: how much of a general-purpose model's latent prior structure
survives the DTS and PHG filter in a given domain; how the quantity and
diversity of the extraction queries affect the coherence of the resulting
distilled prior; and whether the data-efficiency gains from distillation
initialization over cold-start domain training are consistent across domains
with differing dimensional and geometric complexity.

Formal metrics for prior coherence before and after distillation are also
needed. A distilled prior should be measurably more consistent with the
domain's dimensional constraints than the raw extracted distribution, and the
improvement should be quantifiable in terms of the DTS constraint satisfaction
rate and the PHG saturation convergence properties. Developing these metrics
would allow the distillation mechanism to be evaluated objectively and tuned
for specific deployment contexts.

A further open question is the relationship between source model scale and
distillation quality. The mechanism assumes that larger models with broader
training corpora carry richer latent prior structure for physical domains.
Whether this assumption holds uniformly, or whether domain-specific pretraining
at smaller scale produces a more coherent source for distillation, is an
empirical question with significant practical implications for the cost of
deploying adaptive domain models in resource-constrained environments.

\section{Conclusion}

The argument developed in this paper begins with a substrate observation and
ends with an architectural claim.

The substrate observation is that the memory overhead of training, the optimizer
complexity, and the structural degradation of geometric properties through
gradient updates are consequences of IEEE-754 arithmetic accumulated over
four decades of engineering practice, not properties of learning itself. The
field is inured to these properties because it has always had them and has built
effective tools around them. Identifying them as substrate artifacts, separable
from the learning problem, is the prerequisite for addressing them.

The architectural claim is that the combination of PHG grade preservation,
forward-mode autodiff with quire accumulation, and b-posit arithmetic
constitutes a training foundation that eliminates these arithmetic artifacts
without sacrificing the statistical properties that make gradient-based learning
work. The result is a class of models that train at approximately twice the
inference memory footprint, independent of model depth, maintain geometric
structural properties through training as type-level invariants, and adapt
continuously to their operational context through the warm rotation mechanism
introduced in Section~4.

The paper introduces two novel mechanisms developed within this architecture.
The first, warm rotation, provides the operational path by which an updated
model enters an active inference pathway without service interruption, with
structural correctness verified before the transition and the evidence
provenance cryptographically anchored in the version record. The second,
Bayesian distillation, resolves the prior initialization problem: the latent
Bayesian structure of a general-purpose language model, demonstrated to be
accessible and transferable by recent empirical work~\cite{vansteenkiste2026},
is extracted and formalized through the ADM training regime. General-purpose
models contribute the breadth of their training as a prior source; the type
system imposes the domain's structural constraints as a formalizing filter.
The result is a domain model that is both informed by large-scale pretraining
and certifiable with respect to its domain's physical structure.

Loss-function gradient descent over Clifford algebra neural networks and
spike-timing-dependent plasticity over neuromorphic spiking networks are
both instances of this architecture, and the warm rotation mechanism and
versioning infrastructure of Section~4 applies uniformly to each. Trust in
the resulting system is a structural property discharged at design time by
the elaboration of the program's type and grade invariants and anchored in
cryptographically signed version records, and is established without
reliance on benchmark performance against a curated test set.

\section*{Acknowledgments}

The author thanks John L.\ Gustafson for guidance on posit arithmetic, the b-posit bounded-regime arithmetic pathway, and for \textit{Every Bit Counts: Posit Computing}~\cite{gustafson2024}, whose Chapter~13 provides the definitive parameterization reference for application-specific posit systems. The treatment of representation selection for training arithmetic in Sections~3 and~4 reflects his technical influence directly. The author thanks Barak A.\ Pearlmutter for his encouragement of this research program and for his foundational work on automatic differentiation, which, together with the forward gradient result in Baydin et al.~\cite{baydin2022}, underpins the training substrate described in this paper. The author also thanks Don Syme, whose F\# language and Units of Measure system are the type-theoretic substrate from which DTS draws its inference architecture, and whose co-authorship of the forward gradient paper connected this work to the broader AD research community. The author also thanks Paul Snively for contributions on delimited continuations and the geometric algebra problem framing documented in the companion paper~\cite{phg} that informed this work. The author also thanks Mart\'{i}n Coll, whose Inet MLIR dialect~\cite{coll2025} is the rewrite-rule substrate on which the program hypergraph construction of the companion paper~\cite{phg} is primarily informed, and whose subsequent work on the hypergraph interchange format~\cite{coll2025hif} provides the structural serialization precedent for the PHG's cross-tool provenance handling.

\section*{Software Availability}

The Clef language, Composer compiler, and supporting libraries described in this paper are developed under the Fidelity Framework project. Source repositories are available at \url{https://github.com/FidelityFramework}. The language specification, design rationale, and compiler documentation are published at \url{https://clef-lang.com}. All components referenced in this paper, including the ADM training substrate, warm rotation infrastructure, and BAREWire interchange protocol, are under active development.

\end{document}